\newcommand{\E}{\mathbb{E}} 
\newcommand{\PP}{\mathbb{P}} 
\newcommand{\QQ}{\mathbb{Q}} 
\newcommand{\RR}{\mathbb{R}} 
\newcommand{\Cov}{\mathbb{COV}} 
\newtheorem{theorem}{Theorem}
\newtheorem{lemma}{Lemma}
\newtheorem{corollary}{Corollary}
\DeclareMathOperator*{\argmin}{\arg\!\min}
\title{Boltzmann machines and energy-based models}
\author{Takayuki Osogami\\IBM Research - Tokyo}
\date{osogami@jp.ibm.com}
\begin{document}

\maketitle

\begin{abstract}
We review Boltzmann machines and energy-based models.  A Boltzmann 
machine defines a probability distribution over binary-valued patterns.  
 One can learn parameters of a Boltzmann machine via gradient based 
approaches in a way that log likelihood of data is increased.  The 
gradient and Hessian of a Boltzmann machine admit beautiful 
mathematical representations, although computing them is in general 
intractable.  This intractability motivates approximate methods, 
including Gibbs sampler and contrastive divergence, and tractable 
alternatives, namely energy-based models.
\end{abstract}

\section{Introduction}

The Boltzmann machine has received considerable attention particularly
after the publication of the seminal paper by Hinton and Salakhutdinov
on autoencoder with stacked restricted Boltzmann machines
\cite{HinSal06}, which leads to today's success of and expectation to
deep learning \cite{DeepLearning_Sch,DeepLearning_GBC} as well as a wide
range of applications of Boltzmann machines such as collaborative
filtering \cite{SMH07}, classification of images and documents
\cite{classificationRBM}, and human choice \cite{RBMchoice,DeepChoice}.
The Boltzmann machine is a stochastic (generative) model that can
represent a probability distribution over binary patterns and others
(see Section~\ref{sec:BM:BM}).  The stochastic or generative capability
of the Boltzmann machine has not been fully exploited in today's deep
learning.  For further advancement of the field, it is important to
understand basics of the Boltzmann machine particularly from
probabilistic perspectives.  In this paper, we review fundamental
properties of the Boltzmann machines with particular emphasis on
probabilistic representations that allow intuitive interpretations in
terms of probabilities.

A core of this paper is in the learning rules based on gradients or
stochastic gradients for Boltzmann machines
(Section~\ref{sec:BM:generative}-Section~\ref{sec:BM:discriminative}).
These learning rules maximize the log likelihood of given dataset or
minimize the Kullback-Leibler (KL) divergence to a target
distribution.  In particular, Boltzmann machines admit concise
mathematical representations for its gradients and Hessians.  For
example, Hessians can be represented with covariance matrices.

The exact learning rules, however, turn out to be computationally 
intractable for general Boltzmann machines.  We then review approximate 
learning methods such as Gibbs sampler and contrastive divergence in 
Section~\ref{sec:BM:evaluate}.  

We also review other models that are related to the Boltzmann machine in 
Section~\ref{sec:BM:others}.  For example, the Markov random field is a 
generalization of the Boltzmann machine.  We also discuss how to deal 
with real valued distributions by modifying the Boltzmann machine.  

The intractability of exact learning of the Boltzmann machine motivates 
tractable energy-based learning.  Some of the approximate learning 
methods for the Boltzmann machine may be considered as a form of 
energy-based learning.  As a practical example, we review an 
energy-based model for face detection in Section~\ref{sec:BM:energy}.

This survey paper is based on a personal note prepared for the first
of the four parts of a tutorial given at the 26th International Joint
Conference on Artificial Intelligence (IJCAI-17) held in Melbourne,
Australia on August 21, 2017. See a tutorial
webpage\footnote{https://researcher.watson.ibm.com/researcher/view\_group.php?id
  =7834} for information about the tutorial.  A survey corresponding
to the third part of the tutorial (Boltzmann machines for time-series)
can be found in \cite{BMsurvey3}.

\section{The Boltzmann machine}
\label{sec:BM:BM}

A Boltzmann machine is a network of units that are connected to each
other (see Figure~\ref{fig:BM:BM}).  Let $N$ be the number of units.
Each unit takes a binary value (0 or 1).  Let $X_i$ be the random
variable representing the value of the $i$-th unit for $i\in[1,N]$.
We use a column vector $\boldsymbol{X}$ to denote the random values of the $N$
units.  The Boltzmann machine has two types of parameters: bias and
weight.  Let $b_i$ be the bias for the $i$-th unit for $i\in[1,N]$,
and let $w_{i,j}$ be the weight between unit $i$ and unit $j$ for
$(i,j)\in[1,N-1]\times[i+1,N]$.  We use a column vector $\mathbf{b}$
to denote the bias for all units and a matrix $\mathbf{W}$ to denote
the weight for all pairs of units.  Namely, the $(i,j)$-the element of
$\mathbf{W}$ is $w_{i,j}$.  We let $w_{i,j}=0$ for $i\ge j$ and for
the pair of units $(i,j)$ that are disconnected each other.  The
parameters are collectively denoted by
\begin{align}
\theta
& \equiv
(b_1, \ldots, b_N, w_{1,2}, \ldots, w_{N-1,N}),
\end{align}
which we also denote as $\theta=(\mathbf{b},\mathbf{W})$.

\begin{figure}[tb]
  \centering
  \includegraphics[width=0.5\linewidth]{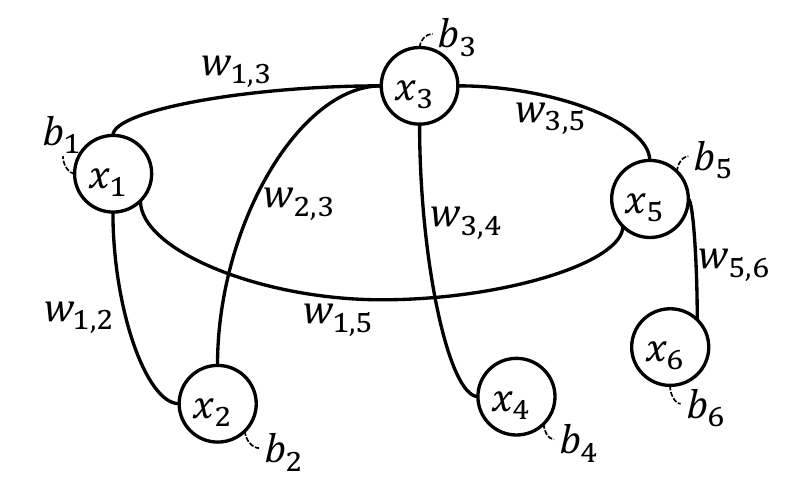}
  \caption{A Boltzmann machine.}
  \label{fig:BM:BM}
\end{figure}

The energy of the Boltzmann machine is defined by
\begin{align}
  E_\theta(\mathbf{x})
  & = - \sum_{i=1}^N b_i \, x_i - \sum_{i=1}^{N-1}\sum_{j=i+1}^N w_{i,j} \, x_i \, x_j \\
  & = - \mathbf{b}^\top \mathbf{x} - \mathbf{x}^\top \mathbf{W} \mathbf{x}.
  \label{eq:BM:energy}
\end{align}
From the energy, the Boltzmann machine defines the probability
distribution over binary patterns as follows:
\begin{align}
  \PP_\theta(\mathbf{x}) 
  = \frac{\exp\left( -E_\theta(\mathbf{x}) \right)}
  {\displaystyle\sum_{\mathbf{\tilde x}}\exp\left( -E_\theta(\mathbf{\tilde x}) \right)}
\label{eq:BM:prob}
\end{align}
where the summation with respect to $\mathbf{\tilde x}$ is over all of the possible $N$ bit binary values.
Namely, the higher the energy of a pattern $\mathbf{x}$, the less
likely that the $\mathbf{x}$ is generated.  For a moment, we do not
address the computational aspect of the denominator,
which involves a summation of $2^N$ terms.
This denominator is also known as a partition function:
\begin{align}
  Z \equiv \sum_{\mathbf{\tilde x}} \exp\left( -E_\theta(\mathbf{x}) \right).
\end{align}

A Boltzmann machine can be used to model the probability distribution,
$\PP_{\rm target}(\cdot)$, of target patterns.  Namely, by optimally
setting the values of $\theta$, we approximate $\PP_{\rm
  target}(\cdot)$ with $\PP_\theta(\cdot)$.  Here, some of the units
of the Boltzmann machine are allowed to be hidden, which means that
those units do not directly correspond to the target patterns (see
Figure~\ref{fig:BM:roles:hidden}).  The units that directly correspond
to the target patterns are called visible.  The primary purpose of the
hidden units is to allow particular dependency between visible units,
which cannot be represented solely with visible units.  The visible
units may be divided into input and output (see
Figure~\ref{fig:BM:roles:inout}).  Then the Boltzmann machine can be
used to model the conditional distribution of the output patterns given
an input pattern.

\begin{figure}[bt]
  \begin{minipage}[b]{0.33\linewidth}
    \includegraphics[width=\linewidth]{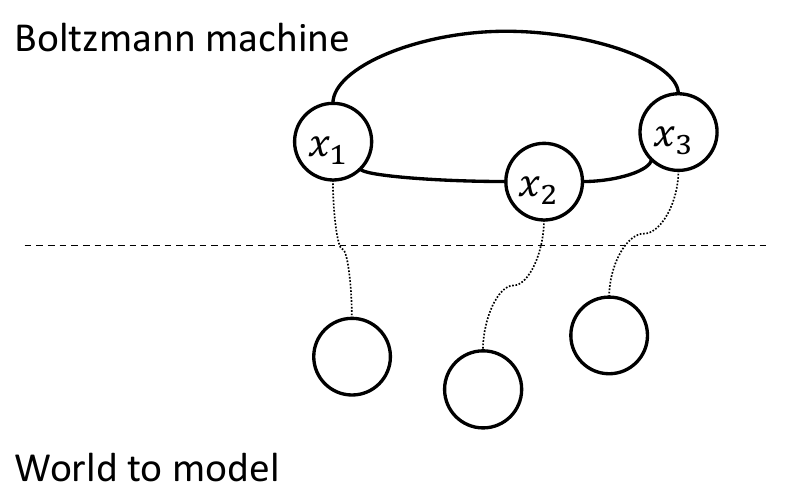}
    \subcaption{visible only}
    \label{fig:BM:roles:visible}
  \end{minipage}
  \begin{minipage}[b]{0.33\linewidth}
    \includegraphics[width=\linewidth]{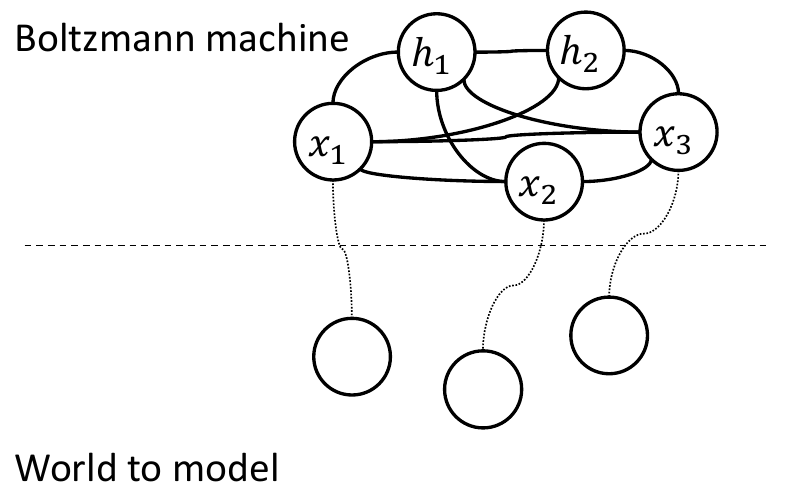}
    \subcaption{visible and hidden}
    \label{fig:BM:roles:hidden}
  \end{minipage}
  \begin{minipage}[b]{0.33\linewidth}
    \includegraphics[width=\linewidth]{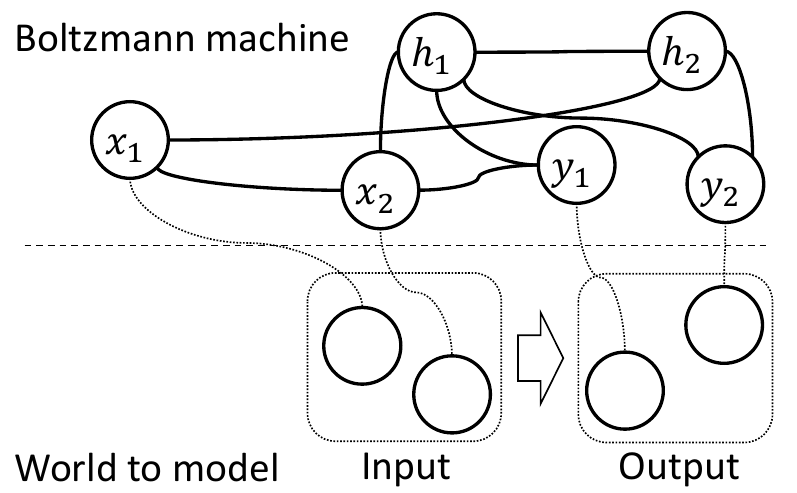}
    \subcaption{input and output}
    \label{fig:BM:roles:inout}
  \end{minipage}
  \caption{Boltzmann machines with hidden units, input, and output}
  \label{fig:BM:roles}
\end{figure}


 \section{Learning a generative model}
\label{sec:BM:generative}

Now we consider the problem of optimally setting the values of
$\theta$ in a way that $\PP_\theta(\cdot)$ best approximates a given
$\PP_{\rm target}(\cdot)$.  Specifically, we seek to minimize the
Kullback-Leibler (KL) divergence from $\PP_\theta$ to $\PP_{\rm
  target}$ \cite{AHS85}:
\begin{align}
  {\rm KL}(\PP_{\rm target} \,||\, \PP_\theta)
  & \equiv
  \sum_{\mathbf{\tilde x}}
  \PP_{\rm target}(\mathbf{\tilde x})
  \log \frac{\PP_{\rm target}(\mathbf{\tilde x})}{\PP_\theta(\mathbf{\tilde x})}\\
  & =   \sum_{\mathbf{\tilde x}}
  \PP_{\rm target}(\mathbf{\tilde x})
  \log \PP_{\rm target}(\mathbf{\tilde x})
  - \sum_{\mathbf{\tilde x}}
  \PP_{\rm target}(\mathbf{\tilde x})
  \log \PP_\theta(\mathbf{\tilde x}).
  \label{eq:BM:KL}
\end{align}
The first term of \eqref{eq:BM:KL} is independent of $\theta$.  It
thus suffices to maximize the negation of the second term:
\begin{align}
  f(\theta)
  & \equiv
  \sum_{\mathbf{\tilde x}}
  \PP_{\rm target}(\mathbf{\tilde x})
  \log \PP_\theta(\mathbf{\tilde x}).
  \label{eq:BM:f}
\end{align}

A special case of $\PP_{\rm target}$ is the empirical distribution
of the patterns in a given training dataset:
\begin{align}
  \mathcal{D} = \{ \mathbf{x}^{(d)} \}_{d=1}^D,
\end{align}
where $D$ is the number of the patterns in $\mathcal{D}$, Then the
objective function \eqref{eq:BM:f} becomes
\begin{align}
  f(\theta)
  & = \frac{1}{D} \sum_{\mathbf{x} \in \mathcal{D}} \log \PP_\theta(\mathbf{x}) \label{eq:BM:log-likelihood}\\
  & = \frac{1}{D} \log \prod_{\mathbf{x} \in \mathcal{D}} \PP_\theta(\mathbf{x}),
\end{align}
which is the log-likelihood of $\mathcal{D}$ with respect to
$\PP_\theta$ when multiplied by $D$.  By defining
\begin{align}
  \PP_\theta(\mathcal{D}) \equiv \prod_{\mathbf{x} \in \mathcal{D}} \PP_\theta(\mathbf{x}),
\end{align}
we can represent $f(\theta)$ as follows:
\begin{align}
  f(\theta)
  & = \frac{1}{D} \log \PP_\theta(\mathcal{D}).
\end{align}

To find the optimal values of $\theta$, we take the gradient of
$f(\theta)$ with respect to $\theta$:
\begin{align}
  \nabla f(\theta)
  = \sum_{\mathbf{\tilde x}} 
  \PP_{\rm target}(\mathbf{\tilde x}) \,
  \nabla \log \PP_\theta(\mathbf{\tilde x}).
  \label{eq:BM:grad}
\end{align}

\subsection{All of the units are visible}

We start with the simplest case where all of the units are visible
(see Figure~\ref{fig:BM:roles:visible}).  Then the energy of
the Boltzmann machine is simply given by \eqref{eq:BM:energy}, and the
probability distribution is given by \eqref{eq:BM:prob}.

\subsubsection{Gradient}
We will derive a specific representation of $\nabla \log
\PP_\theta(\mathbf{x})$ to examine the form of $\nabla
f(\theta)$ in this case:
\begin{align}
  \nabla \log \PP_\theta(\mathbf{x})
  & = \nabla \log \frac{\exp\left( -E_\theta(\mathbf{x}) \right)}
         {\displaystyle\sum_{\mathbf{\hat x}}\exp\left( -E_\theta(\mathbf{\hat x}) \right)} 
  \label{eq:gradLL} \\
  & = -\nabla E_\theta(\mathbf{x})
  - \nabla \log \sum_{\mathbf{\hat x}}\exp\left( -E_\theta(\mathbf{\hat x}) \right) \\
  & = -\nabla E_\theta(\mathbf{x})
  + \frac{\displaystyle\sum_{\mathbf{\hat x}}\exp\left( -E_\theta(\mathbf{\hat x}) \right) \, \nabla E_\theta(\mathbf{\hat x})}
         {\displaystyle\sum_{\mathbf{\tilde x}}\exp\left( -E_\theta(\mathbf{\tilde x}) \right)} 
  \label{eq:BM:grad-log-prev} \\
  & = -\nabla E_\theta(\mathbf{x})
         + \sum_{\mathbf{\hat x}} \PP_\theta(\mathbf{\hat x}) \, \nabla E_\theta(\mathbf{\hat x}),
  \label{eq:BM:grad-log}
\end{align}
where the summation with respect to $\mathbf{\hat x}$ is over all of
the possible binary patterns, similar to the summation with respect to
$\mathbf{\tilde x}$.  Here, \eqref{eq:BM:grad-log} follows from \eqref{eq:BM:prob} and \eqref{eq:BM:grad-log-prev}.

Plugging the last expression into \eqref{eq:BM:grad}, we obtain
\begin{align}
  \nabla f(\theta)
  & = - \sum_{\mathbf{\tilde x}} 
  \PP_{\rm target}(\mathbf{\tilde x}) \,
  \nabla E_\theta(\mathbf{\tilde x})  
  + \sum_{\mathbf{\tilde x}} 
  \PP_{\rm target}(\mathbf{\tilde x}) \, \sum_{\mathbf{\hat x}} \PP_\theta(\mathbf{\hat x}) \, \nabla E_\theta(\mathbf{\hat x}) \\
  & = - \sum_{\mathbf{\tilde x}} 
  \PP_{\rm target}(\mathbf{\tilde x}) \,
  \nabla E_\theta(\mathbf{\tilde x})  
  + \sum_{\mathbf{\hat x}} \PP_\theta(\mathbf{\hat x}) \, \nabla E_\theta(\mathbf{\hat x})
  \label{eq:BM:grad-general-1}\\
  & = - \sum_{\mathbf{\tilde x}} 
  \left( \PP_{\rm target}(\mathbf{\tilde x}) - \PP_\theta(\mathbf{\tilde x}) \right)
  \nabla E_\theta(\mathbf{\tilde x}).
  \label{eq:BM:grad-general}
\end{align}

The last expression allows intuitive interpretation of a
gradient-based method for increasing the value of $f(\theta)$:
\begin{align}
  \theta \leftarrow \theta + \eta \, \nabla f(\theta),
\end{align}
where $\eta$ is the learning rate (or the step size).  Namely, for
each pattern $\mathbf{\tilde x}$, we compare
$\PP_\theta(\mathbf{\tilde x})$ against $\PP_{\rm
  target}(\mathbf{\tilde x})$.  If $\PP_\theta(\mathbf{\tilde x})$ is
greater than $\PP_{\rm target}(\mathbf{\tilde x})$, we update $\theta$
in a way that it increases the energy $E_\theta(\mathbf{\tilde x})$ so
that the $\mathbf{\tilde x}$ becomes less likely to be generated with
$\PP_\theta$.  If $\PP_\theta(\mathbf{\tilde x})$ is smaller than
$\PP_{\rm target}(\mathbf{\tilde x})$, we update $\theta$ in a way
that $E_\theta(\mathbf{\tilde x})$ decreases.

We will also write \eqref{eq:BM:grad-general-1} as follows:
\begin{align}
\nabla f(\theta)
=
- \E_{\rm target}\left[ \nabla E_\theta(\boldsymbol{X}) \right]
+ \E_\theta\left[ \nabla E_\theta(\boldsymbol{X}) \right],
\label{eq:BM:grad-in-E}
\end{align}
where $\E_{\rm target}[\cdot]$ is the expectation with respect to
$\PP_{\rm target}$, $\E_\theta[\cdot]$ is the expectation with respect
to $\PP_\theta$, and $\boldsymbol{X}$ is the vector of random variables
denoting the values of the $N$ units.  Note that the expression of the
gradient in \eqref{eq:BM:grad-in-E} holds for any form of energy, as
long as the energy is used to define the probability as in
\eqref{eq:BM:prob}.

Now we take into account the specific form of the energy given by
\eqref{eq:BM:energy}.  Taking the derivative with respect to each
parameter, we obtain
\begin{align}
  \frac{\partial}{\partial b_i} E_\theta(\mathbf{x}) & = - x_i \\
  \frac{\partial}{\partial w_{i,j}} E_\theta(\mathbf{x}) & = - x_i \, x_j
  \label{eq:BM:partial-w-0}
\end{align}
for $i\in[1,N]$ and $(i,j)\in[1,N-1]\times[i+1,N]$.  From \eqref{eq:BM:grad-in-E}, we then find
\begin{align}
  \frac{\partial}{\partial b_i} f(\theta)
  & = \E_{\rm target}[X_i] -  \E_\theta [X_i] \\
  \frac{\partial}{\partial w_{i,j}} f(\theta)
  & = \E_{\rm target}[X_i\,X_j] -  \E_\theta [X_i\,X_j],
  \label{eq:BM:partial-w}
\end{align}
where $X_i$ is the random
variable denoting the value of the $i$-th unit for each $i\in[1,N]$.
Notice that the expected value of $X_i$ is the same as the probability of
$X_i=1$, because $X_i$ is binary.  In general, exact evaluation of
$\E_\theta[X_i]$ or $\E_\theta[X_i\,X_j]$ is
computationally intractable, but we will not be concerned with this
computational aspect until Section~\ref{sec:BM:evaluate}.

A gradient ascent method is thus to iteratively update the parameters as follows:
\begin{align}
  b_i
  & \leftarrow
  b_i + \eta \, \left(\E_{\rm target}[X_i] -  \E_\theta[X_i]\right) 
\label{eq:BM:grad_bi}\\
  w_{i,j}
  & \leftarrow
  w_{i,j} + \eta \, \left(\E_{\rm target}[X_i\,X_j] -  \E_\theta[X_i\,X_j]\right)
\label{eq:BM:grad_wij}
\end{align}
for $i\in[1,N]$ and $(i,j)\in[1,N-1]\times[i+1,N]$.  Intuitively,
$b_i$ controls how likely that the $i$-th unit takes the value 1, and
$w_{i,j}$ controls how likely that the $i$-th unit and the $j$-th unit
simultaneously take the value 1.  For example, when
$\E_\theta[X_i\,X_j]$ is smaller than $\E_{\rm target}[X_i\,X_j]$, we
increase $w_{i,j}$ to increase $\E_\theta[X_i\,X_j]$.  This form of learning rule appears frequently in the context of Boltzmann machines.  Namely, we compare our prediction $\E_\theta[\cdot]$ against the target $E_{\rm target}[\cdot]$ and update $\theta$ in a way that $\E_\theta[\cdot]$ gets closer to $\E_{\rm target}[\cdot]$.

\subsubsection{Stochastic gradient}
We now rewrite \eqref{eq:BM:grad-general-1} as follows:
\begin{align}
  \nabla f(\theta)
  = \sum_{\mathbf{\tilde x}} \PP_{\rm target}(\mathbf{\tilde x})
  \left( 
 - \nabla E_\theta(\mathbf{\tilde x}) 
 + \E_\theta\left[ \nabla E_\theta(\boldsymbol{X}) \right]
  \right).
\end{align}
Namely, $\nabla f(\theta)$ is given by the expected value of $- \nabla
E_\theta(\boldsymbol{X}) + \E_\theta\left[ \nabla E_\theta(\boldsymbol{X})
  \right]$, where the first $\boldsymbol{X}$ is distributed with respect
to $\PP_{\rm target}$.  Recall that the second term is an expectation
with respect to $\PP_\theta$.  This suggests stochastic gradient
methods \cite{SGD,Adam,AdaGrad,RMSProp,momentum}.  At each step, we
sample a pattern $\boldsymbol{X}(\omega)$ according to $\PP_{\rm target}$
and update $\theta$ according to the stochastic gradient:
\begin{align}
  \theta & \leftarrow \theta + \eta \, g_\theta(\omega),
  \label{eq:BM:SGD}
\intertext{where}
  g_\theta(\omega)
  & \equiv 
  - \nabla E_\theta(\boldsymbol{X}(\omega))
  + \E_\theta\left[ \nabla E_\theta(\boldsymbol{X}) \right].
  \label{eq:BM:SG}
\end{align}
When the target distribution is the empirical distribution given by
the training data ${\cal D}$, we only need to take a sample $\boldsymbol{X}(\omega)$
from ${\cal D}$ uniformly at random.

The stochastic gradient method based on
\eqref{eq:BM:SGD}-\eqref{eq:BM:SG} allows intuitive interpretation.
At each step, we sample a pattern according to the target distribution
(or from the training data) and update $\theta$ in a way that the
energy of the sampled pattern is reduced.  At the same time, the
energy of every pattern is increased, where the amount of the increase
is proportional to the probability for the Boltzmann machine with
the latest parameter $\theta$ to generate that pattern (see Figure~\ref{fig:BM:SGD}).

\begin{figure}
\centering
  \includegraphics[width=0.5\linewidth]{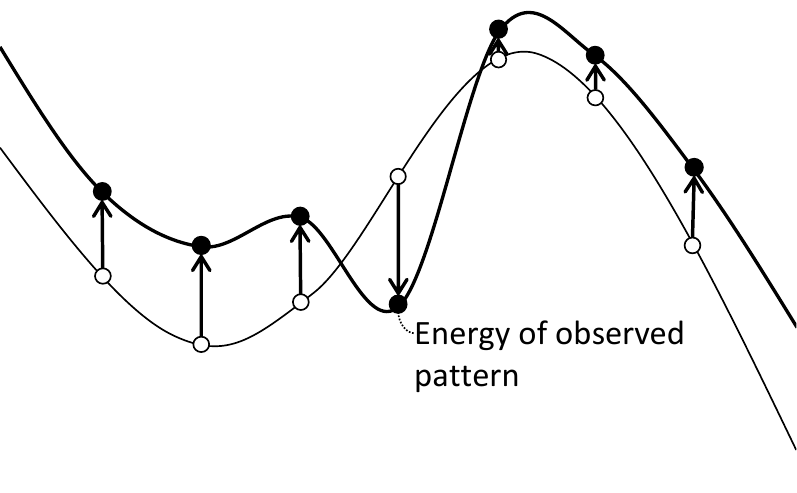}
  \caption{How the energy is updated}
  \label{fig:BM:SGD}
\end{figure}

Taking into account the specific form of the energy given by
\eqref{eq:BM:energy}, we can derive the specific form of the stochastic gradient:
\begin{align}
  \frac{\partial}{\partial b_i} E_\theta(\mathbf{x})(\omega)
  & = X_i(\omega) - \E_\theta[X_i] \\
  \frac{\partial}{\partial w_{i,j}} E_\theta(\mathbf{x})(\omega)
  & = X_i(\omega) \, X_j(\omega) - \E_\theta[X_i\,X_j],
\end{align}
which suggests a stochastic gradient method of iteratively sampling a
pattern $\boldsymbol{X}(\omega)$ according to the target probability
distribution and updating the parameters as follows:
\begin{align}
  b_i
  & \leftarrow
  b_i + \eta \, \left( X_i(\omega) -  \E_\theta[X_i] \right) \\
  w_{i,j}
  & \leftarrow
  w_{i,j} + \eta \, \left( X_i(\omega)\,X_j(\omega) -  \E_\theta[X_i\,X_j] \right)
  \label{eq:BM:update-w}
\end{align}
for $i\in[1,N]$ and $(i,j)\in[1,N-1]\times[i+1,N]$.

\subsubsection{Giving theoretical foundation for Hebb's rule}

The learning rule of \eqref{eq:BM:update-w} has a paramount importance
of providing a theoretical foundation for Hebb's rule of
learning in biological neural networks~\cite{Hebb}:
\begin{quote}
\textit{When an axon of cell A is near enough to excite a cell B and
repeatedly or persistently takes part in firing it, some growth
process or metabolic change takes place in one or both cells such that
A's efficiency, as one of the cells firing B, is increased.}
\end{quote}
In short, ``neurons wire together if they fire together'' \cite{shortHebb}.

A unit of a Boltzmann machine corresponds to a neuron, and $X_i=1$
means that the $i$-th neuron fires.  When two neurons, $i$ and $j$,
fire ($X_i(\omega)\,X_j(\omega)=1$), the wiring weight $w_{i,j}$
between the two neurons gets stronger according to
\eqref{eq:BM:update-w}.  Here, notice that we have $0<\E_\theta[X_i\,X_j]
<1$ as long as the values of $\theta$ are finite.

The learning rule of the Boltzmann machine also involves a mechanism
beyond what is suggested by Hebb's rule.  Namely, the amount of the
change in $w_{i,j}$ when the two neurons ($i$ and $j$) fire depends on
how likely those two neurons fire according to $\PP_\theta$
at
that time.  More specifically, it the two neurons are already expected
to fire together ({\it i.e.}, $\E_\theta[X_i\,X_j]\approx
1$), we increase $w_{i,j}$ only by a small amount ({\it i.e.}, $\eta
\, \left(1 - \E_\theta[X_i\,X_j]\right)$) even if the two
neurons fire together ({\it i.e.}, $X_i(\omega)\,X_j(\omega)=1$).

Without this additional term ($-\E_\theta[X_i]$ or
$-\E_\theta[X_i\,X_j]$) in \eqref{eq:BM:update-w}, all of the
parameters monotonically increases.  If $X_i=1$ with nonzero
probability in $\PP_{\rm target}$, then $b_i$ diverges to $\infty$
almost surely.  Otherwise, $b_i$ stays unchanged from the initial
value.  Likewise, if $X_i\,X_j=1$ with nonzero probability in
$\PP_{\rm target}$, then $w_{i,j}$ diverges to $\infty$ almost surely.
Otherwise, $w_{i,j}$ stays unchanged.

What is important is that this additional term is formally derived
instead of being introduced in an ad hoc manner.  Specifically, the
learning rule is derived from a stochastic model ({\it i.e.}, a
Boltzmann machine) and an objective function ({\it i.e.}, minimizing
the KL divergence to the target distribution or maximizing the
log-likelihood of training data) by taking the gradient with respect
to the parameters.

\subsubsection{Hessian}

We now derive the Hessian of $f(\theta)$ to examine its landscape.
Starting from the expression in \eqref{eq:BM:partial-w}, we obtain
\begin{align}
  \frac{\partial}{\partial w_{k,\ell}} \frac{\partial}{\partial w_{i,j}} f(\theta)
  & = - \frac{\partial}{\partial w_{k,\ell}} \E_\theta[X_i\,X_j] \\
  & = - \sum_{\mathbf{\tilde x}} \tilde x_i \, \tilde x_j \,
  \frac{\partial}{\partial w_{k,\ell}} \PP_\theta(\mathbf{\tilde x}) \label{eq:BM:partial2} \\
  & = - \sum_{\mathbf{\tilde x}} \tilde x_i \, \tilde x_j \,
  \PP_\theta(\mathbf{\tilde x}) \,
  \frac{\partial}{\partial w_{k,\ell}} \log \PP_\theta(\mathbf{\tilde x}) \\
  & = \left(\sum_{\mathbf{\tilde x}} \PP_\theta(\mathbf{\tilde x}) \, \tilde x_i \, \tilde x_j\right)
  \left(\sum_{\mathbf{\tilde x}} \PP_\theta(\mathbf{\tilde x}) \, \tilde x_k \, \tilde x_\ell\right)
  - \sum_{\mathbf{\tilde x}} \PP_\theta(\mathbf{\tilde x}) \, \tilde x_i \, \tilde x_j \, \tilde x_k \, \tilde x_\ell,
\end{align}
where the last expression is obtained from \eqref{eq:BM:grad-log} and
\eqref{eq:BM:partial-w-0}.  The last expression consists of expectations
with respect to $\PP_\theta$ and can be represented conveniently as
follows:
 \begin{align}
  \frac{\partial}{\partial w_{k,\ell}} \frac{\partial}{\partial w_{i,j}}
  f(\theta) & = \E_\theta[X_i\,X_j] \, \E_\theta[X_k\,X_\ell] -
  \E_\theta[X_i\,X_j\,X_k\,X_\ell] \\ & = -\Cov_\theta[ X_i\,X_j,
  X_k\,X_\ell ], \label{eq:BM:Cov}
 \end{align}
where $\Cov_\theta[A,B]$ denotes the covariance between random
variables $A$ and $B$ with respect to $\PP_\theta$.  Likewise, we have
\begin{align}
  \frac{\partial}{\partial b_{k}} \frac{\partial}{\partial w_{i,j}} f(\theta)
  & = -\Cov_\theta[ X_i\,X_j, X_k ] \\
  \frac{\partial}{\partial b_j} \frac{\partial}{\partial b_i} f(\theta)
  & = -\Cov_\theta[ X_i, X_j ].
\end{align}

Therefore, the Hessian of $f(\theta)$ is a covariance matrix:
\begin{align}
  \nabla^2 f(\theta)
  = - \Cov_\theta[ X_1,\ldots,X_N,X_1\,X_2,\ldots,X_{N-1}\,X_N ],
\end{align}
where we use $\Cov_\theta$ to denote a covariance matrix with respect
to $\PP_\theta$.  When $\theta$ is finite, this covariance matrix is
positive semidefinite, and $f(\theta)$ is concave.  This justifies
(stochastic) gradient based approaches to optimizing $\theta$.  This
concavity has been known \cite{BM_scholarpedia}, but I am not aware of
the literature that explicitly represent the Hessian with a
covariance matrix.

\subsubsection{Summary}

Consider a Boltzmann machine with parameters 
$\theta=(\mathbf{b},\mathbf{W})$.  When all of the $N$ 
units of the Boltzmann machine are visible, the Boltzmann machine 
defines a probability distribution $\PP_\theta$ of $N$-bit binary 
patterns by
\begin{align}
  \PP_\theta(\mathbf{x})
  =
  \frac{\exp\left( -E_\theta(\mathbf{x}) \right)}
  {\displaystyle\sum_{\mathbf{\tilde x}} \exp\left( -E_\theta(\mathbf{\tilde x}) \right)},
\end{align}
where the energy is 
\begin{align}
  E_\theta(\mathbf{x})
  \equiv
  - \mathbf{b}^\top \mathbf{x} - \mathbf{x}^\top\mathbf{W}\,\mathbf{x}.
  \label{eq:BM:energy-visible}
\end{align}

The KL divergence from $\PP_\theta$ to $\PP_{\rm target}$ can be
minimized (or the log-likelihood of the target data having the
empirical distribution $\PP_{\rm target}$ can be maximized) by
maximizing
\begin{align}
  f(\theta)
  \equiv
  \E_{\rm target}\left[ \log \PP_\theta(\boldsymbol{X}) \right].
\end{align}
The gradient and the Hessian of $f(\theta)$ is given by
\begin{align}
  \nabla f(\theta)
  & = \E_{\rm target}[\boldsymbol{S}] - \E_\theta[\boldsymbol{S}]
  \label{eq:BM:gradH-visible}\\
  \nabla^2 f(\theta)
  & = - \Cov_\theta(\boldsymbol{S}),
\end{align}
where $\boldsymbol{S}$ denotes the vector of the random variables
representing the value of a unit or the product of the values of a
pair of units:
\begin{align}
  \boldsymbol{S} = (X_1, \ldots, X_N, X_1\,X_2, \ldots, X_{N-1}\,X_N).
  \label{eq:BM:S-visible}
\end{align}

\subsection{Some of the units are hidden}

In this section, we consider Boltzmann machines that have both visible units and 
hidden units.  Let $N$ be the number of visible units and $M$ be the 
number of hidden units.

\subsubsection{Necessity of hidden units}

We first study the necessity of hidden units \cite{AHS85}.  The
Boltzmann machine with $N$ units have
\begin{align}
  N + \frac{1}{2} N \, (N-1) = \frac{1}{2} N \, (N+1)
\end{align}
parameters.  This Boltzmann machine is used to model $N$-bit binary
patterns.  There are $2^N$ possible $N$-bit binary patterns, and the general distribution
of $N$-bit patterns assigns probabilities to those $2^N$ patterns.  We need
\begin{align}
  2^N - 1
\end{align}
parameters to characterize this general distribution.

The number of parameters of the Boltzmann machine is smaller than the
number of parameters needed to characterize the general distribution as
long as $N>2$.  This suggests that the probability distribution that
can be represented by the Boltzmann machine is limited.  One way to
extend the flexibility of the Boltzmann machine is the use of hidden
units.

\subsubsection{Free energy}

Let $\mathbf{x}$ denote the visible values ({\it i.e.}, the values of visible units),
$\mathbf{h}$ denote the hidden values,
and $(\mathbf{x},\mathbf{h})$ denote the values of all units.
We write the marginal probability
distribution of the visible values as follows:
\begin{align}
  \PP_\theta(\mathbf{x})
  & = \sum_{\mathbf{\tilde h}} \PP_\theta(\mathbf{x},\mathbf{\tilde h}),
\end{align}
where the summation is over all of the possible binary patterns of the hidden values, and
\begin{align}
  \PP_\theta(\mathbf{x},\mathbf{h})
  & = \frac{\exp\left( -E_\theta(\mathbf{x},\mathbf{h}) \right)}
     {\displaystyle\sum_{\mathbf{\tilde x},\mathbf{\tilde h}} \exp\left( -E_\theta(\mathbf{\tilde x},\mathbf{\tilde h}) \right)}.
\end{align}
Here, we write energy as follows:
\begin{align}
  E_\theta(\mathbf{x},\mathbf{h})
  & = -\mathbf{b}^\top \, {\mathbf{x}\choose\mathbf{h}}
  - \left(\mathbf{x}^\top,\mathbf{h}^\top\right) \, \mathbf{W} \, {\mathbf{x}\choose\mathbf{h}}
  \label{eq:BM:energy-hidden}\\
  & = -(\mathbf{b}^{\rm V})^\top \, \mathbf{x} - (\mathbf{b}^{\rm H})^\top \, \mathbf{h}
  -\mathbf{x}^\top \, \mathbf{W}^{\rm VV} \, \mathbf{x} 
  -\mathbf{x}^\top \, \mathbf{W}^{\rm VH} \, \mathbf{h} 
  -\mathbf{h}^\top \, \mathbf{W}^{\rm HH} \, \mathbf{h}.
\label{eq:BM:energy-hidden-2}
\end{align}

Now, we define free energy as follows:
\begin{align}
  F_\theta(\mathbf{x})
 & \equiv - \log \sum_{\mathbf{\tilde h}} \exp\left( -E_\theta(\mathbf{x},\mathbf{\tilde h} )\right).
 \label{eq:BM:freeenergy}
\end{align}
We can then represent $\PP_\theta(\mathbf{x})$ in a way similar to
the case where all of the units are visible, replacing energy with
free energy:
\begin{align}
  \PP_\theta(\mathbf{x})
  & = \sum_{\mathbf{\tilde h}} \PP_\theta(\mathbf{x},\mathbf{\tilde h}) \\
  & = \frac{\displaystyle\sum_{\mathbf{\tilde h}} \exp\left( -E_\theta(\mathbf{x},\mathbf{\tilde h}) \right)}
     {\displaystyle\sum_{\mathbf{\tilde x},\mathbf{\tilde h}} \exp\left( -E_\theta(\mathbf{\tilde x},\mathbf{\tilde h}) \right)} \\
  & = \frac{\exp\left( -F_\theta(\mathbf{x}) \right)}
     {\displaystyle\sum_{\mathbf{\tilde x}} \exp\left( -F_\theta(\mathbf{\tilde x}) \right)}.
\end{align}

\subsubsection{Gradient}

In \eqref{eq:BM:grad-general-1}, we simply replace energy with free
energy to obtain the gradient of our objective function when some of
the units are hidden:
\begin{align}
  \nabla f(\theta)
  & = - \sum_{\mathbf{\tilde x}} 
  \PP_{\rm target}(\mathbf{\tilde x}) \,
  \nabla F_\theta(\mathbf{\tilde x})  
  + \sum_{\mathbf{\tilde x}} \PP_\theta(\mathbf{\tilde x}) \, \nabla F_\theta(\mathbf{\tilde x}) \\
  & = - \E_{\rm target}\left[ \nabla F_\theta(\boldsymbol{X}) \right]
	+ \E_\theta\left[ \nabla F_\theta(\boldsymbol{X}) \right]
\end{align}

What we then need is the gradient of free energy:
\begin{align}
  \nabla F_\theta(\mathbf{x})
  & = - \nabla \log \sum_{\mathbf{\tilde h}} \exp\left( -E_\theta(\mathbf{x},\mathbf{\tilde h}) \right) \\
  & = \frac{\displaystyle\sum_{\mathbf{\tilde h}} \exp\left( -E_\theta(\mathbf{x},\mathbf{\tilde h}) \right) \, \nabla E_\theta(\mathbf{x},\mathbf{\tilde h})}
  {\displaystyle\sum_{\mathbf{\tilde h}} \exp\left( -E_\theta(\mathbf{x},\mathbf{\tilde h}) \right)} \\
  & = \sum_{\mathbf{\tilde h}} \PP_\theta(\mathbf{\tilde h} \,|\, \mathbf{x}) \,
  \nabla E_\theta(\mathbf{x},\mathbf{\tilde h}),
\label{eq:BM:grad_free_energy}
\end{align}
where $\PP_\theta(\mathbf{h} \,|\, \mathbf{x})$ is the conditional
probability that the hidden values are $\mathbf{h}$ given
that the visible values are $\mathbf{x}$:
\begin{align}
  \PP_\theta(\mathbf{h} \,|\, \mathbf{x})
  & \equiv \frac{\exp\left( -E_\theta(\mathbf{x},\mathbf{h}) \right)}
  {\displaystyle\sum_{\mathbf{\tilde h}} \exp\left( -E_\theta(\mathbf{x},\mathbf{\tilde h}) \right)} 
\label{eq:BM:conditional-probability}\\
  & = 
    \frac{\exp\left( -E_\theta(\mathbf{x},\mathbf{h}) \right)}
         {\displaystyle\sum_{\mathbf{\tilde x},\mathbf{\tilde h}} \exp\left( -E_\theta(\mathbf{x},\mathbf{\tilde h}) \right)}
    \,
    \frac{\displaystyle\sum_{\mathbf{\tilde x},\mathbf{\tilde h}} \exp\left( -E_\theta(\mathbf{x},\mathbf{\tilde h}) \right)}
              {\displaystyle\sum_{\mathbf{\tilde h}} \exp\left( -E_\theta(\mathbf{x},\mathbf{\tilde h}) \right)}
   \\
  & = \frac{\PP_\theta(\mathbf{x},\mathbf{h})}
  {\displaystyle\sum_{\mathbf{\tilde h}} \PP_\theta(\mathbf{x},\mathbf{\tilde h})} \\
  & = \frac{\PP_\theta(\mathbf{x},\mathbf{h})}
  {\PP_\theta(\mathbf{x})}.
\end{align}
Observe in \eqref{eq:BM:grad_free_energy} that the gradient of free energy 
is expected gradient of energy, where the expectation is with respect to the 
conditional distribution of hidden values given the visible values.

We thus obtain
\begin{align}
  \nabla f(\theta)
  & = - \sum_{\mathbf{\tilde x}} 
  \PP_{\rm target}(\mathbf{\tilde x}) \,
  \sum_{\mathbf{\tilde h}} \PP_\theta(\mathbf{\tilde h} \,|\, \mathbf{\tilde x}) \,
  \nabla E_\theta(\mathbf{\tilde x},\mathbf{\tilde h})
  + \sum_{\mathbf{\tilde x}} \PP_\theta(\mathbf{\tilde x}) \,
  \sum_{\mathbf{\tilde h}} \PP_\theta(\mathbf{\tilde h} \,|\, \mathbf{\tilde x}) \,
  \nabla E_\theta(\mathbf{\tilde x},\mathbf{\tilde h}) \\
  & = - \sum_{\mathbf{\tilde x},\mathbf{\tilde h}}
  \PP_{\rm target}(\mathbf{\tilde x}) \,  
  \PP_\theta(\mathbf{\tilde h} \,|\, \mathbf{\tilde x}) \,
  \nabla E_\theta(\mathbf{\tilde x},\mathbf{\tilde h})
  + \sum_{\mathbf{\tilde x},\mathbf{\tilde h}} \PP_\theta(\mathbf{\tilde x},\mathbf{\tilde h}) \,
  \nabla E_\theta(\mathbf{\tilde x},\mathbf{\tilde h}).
  \label{eq:BM:gradH-hidden-2}
\end{align}

The first term in the last expression (except the minus sign) is the
expected value of the gradient of energy, where the expectation is with
respect to the distribution defined with $\PP_{\rm target}$ and
$\PP_\theta$.  Specifically, the visible values follow $\PP_{\rm
target}$, and given the visible values, $\mathbf{x}$, the hidden values
follow $\PP_\theta(\cdot\,|\,\mathbf{x})$.  We will write this
expectation with $\E_{\rm target}\left[ \E_\theta[\cdot\,|\,
\boldsymbol{X}] \right]$.  The second term is expectation with respect
to $\PP_\theta$, which we denote with $\E_\theta[\cdot]$.  Because the
energy \eqref{eq:BM:energy-hidden} has the form equivalent to
\eqref{eq:BM:energy-visible}, $\nabla f(\theta)$ can then be represented
analogously to \eqref{eq:BM:gradH-visible}:
\begin{align}
  \nabla f(\theta)
  & = - \E_{\rm target}\left[ \E_\theta\left[ \nabla E_\theta(\boldsymbol{X}, \boldsymbol{H}) \,|\, \boldsymbol{X} \right]\right] 
 + \E_\theta\left[ \nabla E_\theta(\boldsymbol{X}, \boldsymbol{H}) \right]
 \label{eq:BM:generative:grad}\\
  & = \E_{\rm target}\left[ \E_\theta[ \boldsymbol{S} \,|\, \boldsymbol{X}] \right] - \E_\theta[\boldsymbol{S}],
\end{align}
where $\boldsymbol{X}$ is the vector of random values of the visible units, 
$\boldsymbol{H}$ is the vector of $H_i$ for $i\in[1,N]$,
and $\boldsymbol{S}$ is defined analogously to \eqref{eq:BM:S-visible} for all of the (visible or hidden) units:
\begin{align}
  \boldsymbol{S} = (U_1, \ldots, U_{N+M}, U_1\,U_2, \ldots, U_{N+M-1}\,U_{N+M}),
  \label{eq:BM:S}
\end{align}
where $U_i \equiv X_i$ for $i\in[1,N]$, and $U_{N+i} \equiv H_i$ for 
$i\in[1,M]$, where $H_i$ is the random variable denoting the 
$i$-th hidden value.

\subsubsection{Stochastic gradient}

The expression with \eqref{eq:BM:gradH-hidden-2} suggests stochastic
gradient analogous to \eqref{eq:BM:SGD}-\eqref{eq:BM:SG}.  Observe
that $\nabla f(\theta)$ can be represented as
\begin{align}
  \nabla f(\theta)
  & = \sum_{\mathbf{\tilde x}} \PP_{\rm target}(\mathbf{\tilde x}) \,
  \left(
  \E_\theta\left[ \nabla E_\theta(\boldsymbol{X},\boldsymbol{H}) \right]
  -
  \E_\theta\left[ \nabla E_\theta(\mathbf{\tilde x},\boldsymbol{H}) \,|\, \mathbf{\tilde x} \right]
  \right),
\end{align}
where $\E_\theta\left[ \nabla E_\theta(\boldsymbol{X},\boldsymbol{H}) \right]$ is
the expected value of the gradient of the energy when both 
visible values and hidden values follow $\PP_\theta$, and $\E_\theta\left[
\nabla E_\theta(\mathbf{\tilde x},\boldsymbol{H}) \,|\, \mathbf{\tilde x} \right]$ is the corresponding
conditional expectation when the hidden values follow
$\PP_\theta(\cdot\,|\,\mathbf{\tilde x})$ given the visible values 
$\mathbf{\tilde x}$.

A stochastic gradient method is then to sample visible values, $\boldsymbol{X}(\omega)$, 
according to $\PP_{\rm target}$ and update
$\theta$ according to the stochastic gradient:
\begin{align}
  g_\theta(\omega)
  & = \E_\theta\left[ \nabla E_\theta(\boldsymbol{X},\boldsymbol{H}) \right]
  -
  \E_\theta\left[ \nabla E_\theta(\boldsymbol{X}(\omega),\boldsymbol{H}) \,|\, \boldsymbol{X}(\omega) \right].
\end{align}

By taking into account the specific form of the energy, we find the
following specific update rule:
\begin{align}
  b_i
  & \leftarrow
  b_i + \eta \, \left( \E_\theta[ U_i \,|\, \boldsymbol{X}(\omega) ] -  \E_\theta[ U_i ]\right) \\
  w_{i,j}
  & \leftarrow
  w_{i,j} 
  + \eta \, \left( \E_\theta[ U_i\,U_j \,|\, \boldsymbol{X}(\omega) ] -  \E_\theta[ U_i\,U_j ] \right),
\end{align}
where each unit ($i$ or $j$) may be either visible or
hidden.  Specifically, let $M$ be the number of hidden units and $N$
be the number of visible units.  Then
$(i,j)\in[1,N+M-1]\times[i+1,N+M]$.  Here, $U_i$ denotes the value of
the $i$-th unit, which may be visible or hidden.  When the $i$-th unit
is visible, its expected value is simply $\E_\theta[ U_i \,|\,
  \boldsymbol{X}(\omega)] = X_i(\omega)$, and $\E_\theta[ U_i\,U_j \,|\,
  \boldsymbol{X}(\omega) ] = X_i(\omega) \, \E_\theta[ U_j \,|\,
  \boldsymbol{X}(\omega) ]$.  When both $i$ and $j$ are visible, we have
$\E_\theta[ U_i\,U_j \,|\, \boldsymbol{X}(\omega) ] = X_i(\omega)
X_j(\omega)$.  

Namely, we have
\begin{align}
  b_i
  & \leftarrow
  b_i + \eta \, \left( X_i(\omega) -  \E_\theta[ X_i ]\right)
\intertext{for a visible unit $i \in[1,N]$,}
  b_i
  & \leftarrow
  b_i + \eta \, \left( \E_\theta[ H_i \,|\, \boldsymbol{X}(\omega) ] -  \E_\theta[ H_i ]\right) 
\intertext{for a hidden unit $i\in[N+1,N+M]$,}
  w_{i,j}
  & \leftarrow
  w_{i,j} 
  + \eta \, \left( X_i(\omega)\,X_j(\omega) -  \E_\theta[ X_i\,X_j ] \right)
\intertext{for a pair of visible units $(i,j) \in [1,N-1]\times[i+1,N]$,}
  w_{i,j}
  & \leftarrow
  w_{i,j} 
  + \eta \, \left( X_i(\omega)\,\E_\theta[ H_j \,|\, \boldsymbol{X}(\omega) ] -  \E_\theta[ X_i\,H_j ] \right) 
\intertext{for a pair of a visible unit and a hidden unit $(i,j)\in[1,N]\times[N+1,N+M]$, and}
  w_{i,j}
  & \leftarrow
  w_{i,j} 
  + \eta \, \left( \E_\theta[ H_i\,H_j \,|\, \boldsymbol{X}(\omega) ] -  \E_\theta[ H_i\,H_j ] \right) 
\end{align}
for a pair of hidden units $(i,j)\in[N+1,N+M-1]\times[i+1,N+M]$.

\subsubsection{Hessian}

We now derive the Hessian of $f(\theta)$ when some of the units are 
hidden. From the gradient of $f(\theta)$ in \eqref{eq:BM:gradH-hidden-2}, 
we can write the partial derivatives as follows:
\begin{align}
  \frac{\partial}{\partial w_{i,j}} f(\theta)
  & = \sum_{\mathbf{\tilde x}} \PP_{\rm target}(\mathbf{\tilde x}) 
	\sum_{\mathbf{\tilde h}} \PP_\theta(\mathbf{\tilde h} \,|\, \mathbf{\tilde x}) \tilde u_i \, \tilde u_j
	- \sum_{\mathbf{\tilde u}} \PP_\theta(\mathbf{\tilde u}) \, \tilde u_i \, \tilde u_j
\label{eq:BM:partial1-hidden} \\
  \frac{\partial}{\partial w_{k,\ell}} \frac{\partial}{\partial w_{i,j}} f(\theta)
  & = \sum_{\mathbf{\tilde x}} \PP_{\rm target}(\mathbf{\tilde x}) 
	\sum_{\mathbf{\tilde h}} \frac{\partial}{\partial w_{k,\ell}} 
	\PP_\theta(\mathbf{\tilde h} \,|\, \mathbf{\tilde x}) \tilde u_i \, \tilde u_j
	- \sum_{\mathbf{\tilde u}} \frac{\partial}{\partial w_{k,\ell}} \PP_\theta(\mathbf{\tilde u}) \, \tilde u_i \, \tilde u_j,
\label{eq:BM:partial2-hidden}
\end{align}
where recall that $u_i\equiv x_i$ for $i\in[1,N]$ and $u_{N+i}\equiv h_i$ for $i\in[1,M]$.

When all of the units are visible, the first term in 
\eqref{eq:BM:partial-w} is the expectation with respect to the target 
distribution and is independent of $\theta$.  Now that some of the units 
are hidden, the corresponding first term in \eqref{eq:BM:partial1-hidden} 
depends on $\theta$.  Here, the first term is expectation, where the 
visible units follow the target distribution, and the hidden units 
follow the conditional distribution with respect to the Boltzmann 
machine with parameters $\theta$ given the visible values.

Notice that the second term in the right-hand side of 
\eqref{eq:BM:partial2-hidden} has the form equivalent to 
\eqref{eq:BM:partial2}.  Hence, we have
\begin{align}
\sum_{\mathbf{\tilde u}} \frac{\partial}{\partial w_{k,\ell}} \PP_\theta(\mathbf{\tilde u}) \, \tilde u_i \, \tilde u_j
= \Cov_\theta[ U_i\,U_j, U_k\,U_\ell ].
\end{align}

The first term in the right-hand side of \eqref{eq:BM:partial2-hidden} 
has a from similar to \eqref{eq:BM:partial2} with two differences.  The 
first difference is that it is an expectation with respect to $\PP_{\rm 
target}$.  The second difference is that the probability $\PP_\theta$ is 
conditional probability given the visible values.  We now show that 
there exists a Boltzmann distribution that has $\PP_\theta(\cdot \,|\, 
\mathbf{x})$ as its probability distribution (see also Figure~\ref{fig:BM:conditional}).
\begin{lemma}
Consider a Boltzmann machine having visible units and hidden units whose energy is given by \eqref{eq:BM:energy-hidden-2}.  The conditional probability distribution of the hidden units given the visible values $\mathbf{x}$ is given by the probability distribution of a Boltzmann machine with bias $\mathbf{b}(\mathbf{x})$ and weight $\mathbf{W}(\mathbf{x})$ that are defined as follows:
\begin{align}
\mathbf{b}(\mathbf{x})
 & \equiv \mathbf{b}^{\rm H} + (\mathbf{W}^{\rm VH})^\top \mathbf{x} 
\label{eq:BM:conditional-b}\\
\mathbf{W}(\mathbf{x})
 & \equiv \mathbf{W}^{\rm HH}.
\label{eq:BM:conditional-w}
\end{align}
\label{lemma:BM:conditional}
\end{lemma}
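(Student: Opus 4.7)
The plan is to show this by direct manipulation of the conditional probability formula \eqref{eq:BM:conditional-probability}, separating the contributions to $E_\theta(\mathbf{x},\mathbf{h})$ according to whether they depend on $\mathbf{h}$ or only on $\mathbf{x}$. Since the conditional distribution fixes $\mathbf{x}$, any factor in $\exp(-E_\theta(\mathbf{x},\mathbf{h}))$ that depends solely on $\mathbf{x}$ appears identically in the normalizer $\sum_{\tilde{\mathbf{h}}} \exp(-E_\theta(\mathbf{x},\tilde{\mathbf{h}}))$ and therefore cancels. The only work, then, is to re-bundle the $\mathbf{h}$-dependent terms into the canonical Boltzmann-machine form $-\mathbf{b}(\mathbf{x})^\top \mathbf{h} - \mathbf{h}^\top \mathbf{W}(\mathbf{x}) \mathbf{h}$ and read off $\mathbf{b}(\mathbf{x})$ and $\mathbf{W}(\mathbf{x})$.

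Concretely, I would first use the decomposition \eqref{eq:BM:energy-hidden-2} to split
\[
E_\theta(\mathbf{x},\mathbf{h}) = \underbrace{-(\mathbf{b}^{\rm V})^\top \mathbf{x} - \mathbf{x}^\top \mathbf{W}^{\rm VV} \mathbf{x}}_{\text{depends only on }\mathbf{x}} \;\underbrace{-\,(\mathbf{b}^{\rm H})^\top \mathbf{h} - \mathbf{x}^\top \mathbf{W}^{\rm VH} \mathbf{h} - \mathbf{h}^\top \mathbf{W}^{\rm HH} \mathbf{h}}_{\text{depends on }\mathbf{h}}.
\]
The $\mathbf{x}$-only part contributes a factor independent of $\mathbf{h}$ to both numerator and denominator of \eqref{eq:BM:conditional-probability} and cancels. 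For the remainder, I would rewrite the cross term using $\mathbf{x}^\top \mathbf{W}^{\rm VH} \mathbf{h} = \bigl((\mathbf{W}^{\rm VH})^\top \mathbf{x}\bigr)^\top \mathbf{h}$ and combine it with $-(\mathbf{b}^{\rm H})^\top \mathbf{h}$ into a single linear term $-\bigl(\mathbf{b}^{\rm H} + (\mathbf{W}^{\rm VH})^\top \mathbf{x}\bigr)^\top \mathbf{h}$, leaving the quadratic term $-\mathbf{h}^\top \mathbf{W}^{\rm HH} \mathbf{h}$ unchanged.

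Matching against \eqref{eq:BM:energy-visible}, the surviving expression is exactly the energy of a Boltzmann machine on $\mathbf{h}$ alone with bias $\mathbf{b}(\mathbf{x}) = \mathbf{b}^{\rm H} + (\mathbf{W}^{\rm VH})^\top \mathbf{x}$ and weight $\mathbf{W}(\mathbf{x}) = \mathbf{W}^{\rm HH}$. Since the normalizer in \eqref{eq:BM:conditional-probability} then equals the partition function of this reduced Boltzmann machine (same cancellation argument applied in reverse), $\PP_\theta(\mathbf{h}\,|\,\mathbf{x})$ coincides with the distribution induced by $(\mathbf{b}(\mathbf{x}),\mathbf{W}(\mathbf{x}))$ via \eqref{eq:BM:prob}, which is what the lemma asserts.

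There is no real obstacle here; the result is essentially a bookkeeping exercise, and the only spot that needs care is the transpose identity for the cross term $\mathbf{x}^\top \mathbf{W}^{\rm VH} \mathbf{h}$, ensuring that the induced bias on the hidden units is $(\mathbf{W}^{\rm VH})^\top \mathbf{x}$ rather than $\mathbf{W}^{\rm VH} \mathbf{x}$ (which would have the wrong dimension). I would also briefly remark that $\mathbf{W}^{\rm VV}$ and $\mathbf{b}^{\rm V}$ play no role in the conditional distribution, which gives an intuitive explanation for why a restricted Boltzmann machine (where $\mathbf{W}^{\rm HH}=0$) yields a particularly simple conditional — a factorized one — a fact this lemma makes immediate.
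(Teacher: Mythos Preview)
Your proposal is correct and follows essentially the same route as the paper: split $E_\theta(\mathbf{x},\mathbf{h})$ into an $\mathbf{x}$-only part that cancels in the ratio \eqref{eq:BM:conditional-probability} and an $\mathbf{h}$-dependent part that is recognized as the energy of a Boltzmann machine on $\mathbf{h}$ with bias $\mathbf{b}^{\rm H}+(\mathbf{W}^{\rm VH})^\top\mathbf{x}$ and weight $\mathbf{W}^{\rm HH}$. The paper's proof is slightly terser (it does not spell out the transpose identity or the RBM remark), but the argument is the same.
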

\begin{proof}
Recall the expression of the conditional probability distribution in \eqref{eq:BM:conditional-probability}:
\begin{align}
\PP_\theta(\mathbf{h} \,|\, \mathbf{x})
& = \frac{\exp\left( -E_\theta(\mathbf{x}, \mathbf{h}) \right)}
	{\displaystyle\sum_{\mathbf{\tilde h}} \exp\left( -E_\theta(\mathbf{x}, \mathbf{h}) \right)},
\label{eq:BM:conditional-probability-2}
\end{align}
where we rewrite the energy in \eqref{eq:BM:energy-hidden-2} as follows:
\begin{align}
E_\theta(\mathbf{x},\mathbf{h})
& = - \left((\mathbf{b}^{\rm V})^\top \mathbf{x} + \mathbf{x}^\top \mathbf{W}^{\rm VV} \mathbf{x}\right)
	- \left((\mathbf{b}^{\rm H})^\top + \mathbf{x}^\top \mathbf{W}^{\rm VH}\right) \mathbf{h}
	- \mathbf{h}^\top \mathbf{W}^{\rm HH} \, \mathbf{h}.
\end{align}
The first term in the right-hand side of the last expression is independent of $\mathbf{h}$ and canceled out between the numerator and the denominator in \eqref{eq:BM:conditional-probability-2}.  We thus consider the Boltzmann machine with parameters $\theta(\mathbf{x}) \equiv \left(\mathbf{b}(\mathbf{x}), \mathbf{W}(\mathbf{x})\right)$ as defined in \eqref{eq:BM:conditional-b}-\eqref{eq:BM:conditional-w}.  Then we have
\begin{align}
\PP_\theta(\mathbf{h} \,|\, \mathbf{x})
& = \frac{\exp\left( -E_{\theta(\mathbf{x})}(\mathbf{h}) \right)}
	{\displaystyle\sum_{\mathbf{\tilde h}} \exp\left( -E_{\theta(\mathbf{x})}(\mathbf{\tilde h}) \right)} \\
& = \PP_{\theta(\mathbf{x})}(\mathbf{h}),
\end{align}
which completes the proof of the lemma.
\end{proof}

\begin{figure}
\begin{minipage}{0.4\linewidth}
\centering
\includegraphics[width=\linewidth]{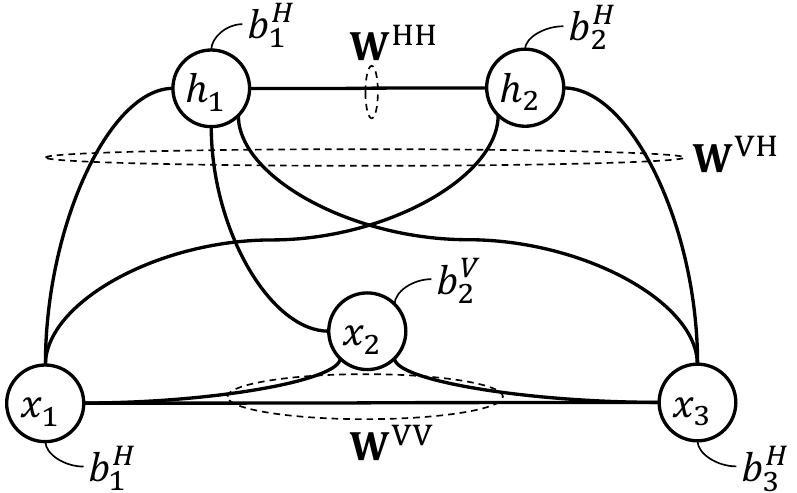}\\
(a)
\end{minipage}
\begin{minipage}{0.2\linewidth}
\ 
\end{minipage}
\begin{minipage}{0.4\linewidth}
\centering
\includegraphics[width=\linewidth]{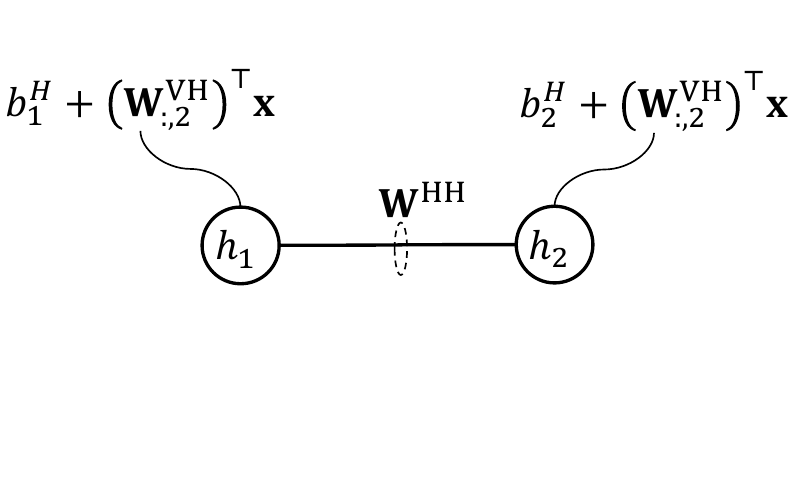}\\
(b)
\end{minipage}
\caption{The conditional distribution of $\mathbf{h}$ given $\mathbf{x}$ in (a) is equal to the distribution of $\mathbf{h}$ in (b), illustrating Lemma~\ref{lemma:BM:conditional}.}
\label{fig:BM:conditional}
\end{figure}

By \eqref{eq:BM:partial2-hidden} and Lemma~\ref{lemma:BM:conditional}, we can represent the second partial derivative as follows:
\begin{align}
  \frac{\partial}{\partial w_{k,\ell}} \frac{\partial}{\partial w_{i,j}} f(\theta)
  & = \E_{\rm target}\left[
	\Cov_{\theta}[U_i\,U_j, U_k\,U_\ell \,|\, \boldsymbol{X} ]
	\right] 
	- \Cov_{\theta}[U_i\,U_j, U_k\,U_\ell ],
\end{align}
where $\Cov_{\theta}[\cdot,\cdot \,|\, \boldsymbol{X}]$ denotes the 
conditional covariance with respect to 
$\PP_{\theta(\boldsymbol{X})}(\cdot)\equiv \PP_\theta(\cdot\,|\,\boldsymbol{X})$ 
given the visible values $\boldsymbol{X}$.

Therefore, the Hessian of $f(\theta)$ is
\begin{align}
  \nabla^2 f(\theta)
  & = \E_{\rm target}\left[
	\Cov_{\theta}[\boldsymbol{S} \,|\, \boldsymbol{X} ]
	\right] 
	- \Cov_{\theta}[\boldsymbol{S}],
\end{align}
where we define $\boldsymbol{S}$ as in \eqref{eq:BM:S}, and 
$\Cov_{\theta}[\cdot,\cdot \,|\, \boldsymbol{X}]$ denotes the conditional 
covariance matrix with respect to $\PP_{\theta(\boldsymbol{X})}$ given the visible values 
$\boldsymbol{X}$.  In general, the Hessian is not 
positive semidefinite, and $f(\theta)$ is not concave.  Hence, 
(stochastic) gradient based approaches do not necessarily find globally 
optimal parameters.

\subsubsection{Summary}

Consider a Boltzmann machine with parameters 
$\theta\equiv(\mathbf{b},\mathbf{W})$, where at least one of the units 
are hidden. The Boltzmann machine defines a probability distribution 
$\PP_\theta$ of the visible values $\mathbf{x}$ and the hidden values 
$\mathbf{h}$ by
\begin{align}
  \PP_\theta(\mathbf{x},\mathbf{h})
  & = \frac{\exp\left( -E_\theta(\mathbf{x},\mathbf{h}) \right)}
     {\displaystyle\sum_{\mathbf{\tilde x},\mathbf{\tilde h}} \exp\left( -E_\theta(\mathbf{\tilde x},\mathbf{\tilde h}) \right)},
\label{eq:BM:hidden:P}
\end{align}
where energy is given by
\begin{align}
  E_\theta(\mathbf{x},\mathbf{h})
  & = -\mathbf{b}^\top \, {\mathbf{x}\choose\mathbf{h}}
  - \left(\mathbf{x}^\top,\mathbf{h}^\top\right) \, \mathbf{W} \, {\mathbf{x}\choose\mathbf{h}}.
\end{align}

The marginal probability distribution of the visible values is
\begin{align}
  \PP_\theta(\mathbf{x})
  & = \sum_{\mathbf{\tilde h}} \PP_\theta(\mathbf{x},\mathbf{\tilde h}) \\
  & = \frac{\exp\left( -F_\theta(\mathbf{x}) \right)}
     {\displaystyle\sum_{\mathbf{\tilde x}} \exp\left( -F_\theta(\mathbf{\tilde x}) \right)},
\end{align}
where free energy is defined as follows:
\begin{align}
  F_\theta(\mathbf{x})
  & \equiv - \log \sum_{\mathbf{\tilde h}} \exp\left( -E_\theta(\mathbf{x},\mathbf{\tilde h}) \right).
\end{align}

The KL divergence from $\PP_\theta$ to $\PP_{\rm target}$ can be
minimized (or the log-likelihood of the target data having the
empirical distribution $\PP_{\rm target}$ can be maximized) by
maximizing
\begin{align}
  f(\theta)
  \equiv
  \E_{\rm target}\left[ \log \PP_\theta(\boldsymbol{X}) \right].
\end{align}
The gradient and the Hessian of $f(\theta)$ are given by
\begin{align}
  \nabla f(\theta)
  & = \E_{\rm target}\left[ \E_\theta[ \boldsymbol{S} \,|\, \boldsymbol{X} ] \right]
	- \E_\theta[\boldsymbol{S}] 
  \label{eq:BM:hidden-summary-grad}\\
  \nabla^2 f(\theta) 
  & = \E_{\rm target}\left[
	\Cov_{\theta}[\boldsymbol{S} \,|\, \boldsymbol{X} ]
	\right] 
	- \Cov_{\theta}[\boldsymbol{S}],
  \label{eq:BM:hidden-summary-laplacian}
\end{align}
where $\boldsymbol{S}$ denotes the vector of the random variables
representing the value of a unit ($U_i=X_i$ for $i\in[1,N]$ and $U_{N+i}=H_i$ for $i\in[1,M]$) or the product of the values of a
pair of units:
\begin{align}
  \boldsymbol{S} \equiv (U_1, \ldots, U_{N+M}, U_1\,U_2, \ldots, U_{N+M-1}\,U_{N+M}).
\end{align}

\section{Learning a discriminative model}
\label{sec:BM:discriminative}

In this section, we study the Boltzmann machine whose visible units are 
divided into input units and output units (see 
Figure~\ref{fig:BM:roles:inout}).  Such a Boltzmann machine is sometimes called a 
conditional Boltzmann machine \cite{Conditional}.  Let $N_{\rm in}$ be 
the number of input units, $N_{\rm out}$ be the number of output units, 
$N=N_{\rm in}+N_{\rm out}$ be the number of visible units, and $M$ be 
the number of hidden units.

Such a Boltzmann machine can be used 
to model a conditional probability distribution of the output values 
$\mathbf{y}$ given the input values $\mathbf{x}$.  Let $\PP_{\theta}(\mathbf{y}\,|\,\mathbf{x})$ denote the 
conditional probability of $\mathbf{y}$ given $\mathbf{x}$. 
 When the Boltzmann machine has hidden units, we also write
\begin{align}
\PP_\theta(\mathbf{y} \,|\, \mathbf{x})
& = \sum_{\mathbf{\tilde h}} 
 \PP_\theta(\mathbf{y},\mathbf{\tilde h} \,|\, \mathbf{x})
 \label{eq:BM:Pyx}\\
& = \frac{
\displaystyle\sum_{\mathbf{\tilde h}} \PP_\theta(\mathbf{x},\mathbf{y},\mathbf{\tilde h})
}{
\displaystyle\sum_{\mathbf{\tilde y},\mathbf{\tilde h}} \PP_\theta(\mathbf{x},\mathbf{y},\mathbf{\tilde h})
 }.
\end{align}

\subsection{Objective function}

A training dataset for a discriminative model is a set of the pairs of input and output:
\begin{align}
{\cal D} = \{ (\mathbf{x}^{(d)}, \mathbf{y}^{(d)}) \}_{d=1}^D.
\end{align}
When this training dataset is given, a natural objective function corresponding to the log-likelihood \eqref{eq:BM:log-likelihood} for the generative model is
\begin{align}
f(\theta)
& = \frac{1}{D} \sum_{(\mathbf{x},\mathbf{y})\in{\cal D}} 
	\log \PP_\theta(\mathbf{y}\,|\,\mathbf{x}) 
\label{eq:BM:discriminative-f}\\
& = \frac{1}{D} 
	\log \prod_{(\mathbf{x},\mathbf{y})\in{\cal D}} \PP_\theta(\mathbf{y}\,|\,\mathbf{x}).
\end{align}

This objective function may also be related to a KL divergence.  Let 
$\PP_{\rm target}(\cdot\,|\,\cdot)$ be the target conditional 
distribution, which we seek to model with $\PP_\theta(\cdot\,|\,\cdot)$. 
 Consider the following expected KL divergence from 
$\PP_\theta(\cdot\,|\,\boldsymbol{X})$ to $\PP_{\rm 
target}(\cdot\,|\,\boldsymbol{X})$, where $\boldsymbol{X}$ is the random 
variable denoting the input values that are distributed according to 
$\PP_{\rm target}$:
\begin{align}
\lefteqn{\E_{\rm target}\left[
{\rm KL}(\PP_{\rm target}(\cdot\,|\,\boldsymbol{X}) \,||\, \PP_\theta(\cdot\,|\,\boldsymbol{X}))
\right]} \notag\\
& = \sum_{\mathbf{\tilde x}} \PP_{\rm target}(\mathbf{\tilde x})
	\sum_{\mathbf{\tilde y}} 
	\PP_{\rm target}(\mathbf{\tilde y}\,|\,\mathbf{\tilde x}) \,
	\log \frac{\PP_{\rm target}(\mathbf{\tilde y}\,|\,\mathbf{\tilde x})}
		{\PP_\theta(\mathbf{\tilde y}\,|\,\mathbf{\tilde x})} \\
& = \sum_{\mathbf{\tilde x},\mathbf{\tilde y}}
	\PP_{\rm target}(\mathbf{\tilde x},\mathbf{\tilde y}) \,
	\log \PP_{\rm target}(\mathbf{\tilde y}\,|\,\mathbf{\tilde x})
  - \sum_{\mathbf{\tilde x},\mathbf{\tilde y}} 
	\PP_{\rm target}(\mathbf{\tilde x},\mathbf{\tilde y}) \,
	\log \PP_\theta(\mathbf{\tilde y}\,|\,\mathbf{\tilde x}).
\end{align}

The first term of the last expression is independent of $\theta$.  To minimize the expected KL divergence, it thus suffices to maximize the negation of the second term:
\begin{align}
f(\theta)
& = \sum_{\mathbf{\tilde x},\mathbf{\tilde y}} 
	\PP_{\rm target}(\mathbf{\tilde x},\mathbf{\tilde y}) \,
	\log \PP_\theta(\mathbf{\tilde y}\,|\,\mathbf{\tilde x})
\label{eq:BM:discriminative-f-dist}\\
& = \sum_{\mathbf{\tilde x},\mathbf{\tilde y}} 
	\PP_{\rm target}(\mathbf{\tilde x},\mathbf{\tilde y}) \,
	\log \frac{\PP_\theta(\mathbf{\tilde x}, \mathbf{\tilde y})}{\PP_\theta(\mathbf{\tilde x})} \\
& = \sum_{\mathbf{\tilde x},\mathbf{\tilde y}} 
	\PP_{\rm target}(\mathbf{\tilde x},\mathbf{\tilde y}) \,
	\log \PP_\theta(\mathbf{\tilde x}, \mathbf{\tilde y})
  - \sum_{\mathbf{\tilde x}} 
	\PP_{\rm target}(\mathbf{\tilde x}) \,
	\log \PP_\theta(\mathbf{\tilde x}) \\
& = \E_{\rm target}\left[
	\log \PP_\theta(\boldsymbol{X}, \mathbf{Y})
	\right]
  - \E_{\rm target}\left[
	\log \PP_\theta(\boldsymbol{X})
	\right]
\label{eq:BM:discriminative-f-in-E}
\end{align}
When $\PP_{\rm target}$ is the empirical distribution of the pairs of input and output in the training dataset, \eqref{eq:BM:discriminative-f-dist} is reduced to \eqref{eq:BM:discriminative-f}.

\subsection{Gradient, stochastic gradient, Hessian}

Observe that the first term of \eqref{eq:BM:discriminative-f-in-E} is 
equivalent to the objective function of the generative model where both 
of input and output are visible ({\it i.e.}, input and output in 
\eqref{eq:BM:discriminative-f-in-E} should be regarded as visible in 
\eqref{eq:BM:hidden-summary-grad}).  The second term 
is analogous to the first term, but now only the input should be 
regarded as visible ({\it i.e.}, output and hidden in 
\eqref{eq:BM:discriminative-f-in-E} should be regarded as hidden in 
\eqref{eq:BM:hidden-summary-grad}).

The gradient thus follows from \eqref{eq:BM:hidden-summary-grad}:
\begin{align}
\nabla f(\theta)
& = \E_{\rm target}\left[ \E_{\theta}[ \boldsymbol{S} \,|\, \boldsymbol{X}, \mathbf{Y} ] \right]
	- \E_\theta[ \boldsymbol{S} ]
  - \left(
	\E_{\rm target}\left[ \E_{\theta}[ \boldsymbol{S} \,|\, \boldsymbol{X} ] \right]
	- \E_\theta[ \boldsymbol{S} ]
    \right)\\
& = \E_{\rm target}\left[ \E_{\theta}[ \boldsymbol{S} \,|\, \boldsymbol{X}, \mathbf{Y} ] 
  	- \E_{\theta}[ \boldsymbol{S} \,|\, \boldsymbol{X} ] \right],
\label{eq:BM:discriminative-grad}
\end{align}
where $\boldsymbol{S}$ denotes the vector of random variables representing 
the value of a unit or the product of the values of a pair of units:
\begin{align}
\boldsymbol{S}
\equiv
(U_1, \dots, U_{N+M}, U_1\,U_2, \ldots, U_{N+M-1}\,U_{N+M}),
\end{align}
where $U_i \equiv X_i$ for $i\in[1,N_{\rm in}]$, $U_{N_{\rm in}+i} 
\equiv Y_i$ for $i\in[1,N_{\rm out}]$, and $U_{N+i} \equiv H_i$ for 
$i\in[1,M]$.

Because \eqref{eq:BM:discriminative-grad} is expressed as an expectation 
with respect to $\PP_{\rm target}$, it directly gives the following 
stochastic gradient:
\begin{align}
g_\theta(\omega)
& = \E_{\theta}[ \boldsymbol{S} \,|\, \boldsymbol{X}(\omega), \mathbf{Y}(\omega) ] 
  	- \E_{\theta}[ \boldsymbol{S} \,|\, \boldsymbol{X}(\omega) ],
\end{align}
where $(\boldsymbol{X}(\omega), \mathbf{Y}(\omega))$ is sampled according to $\PP_{\rm target}$.

Specifically, we have the following learning rule of a stochastic gradient method:
\begin{align}
  b_i
  & \leftarrow
  b_i + \eta \, \left( Y_i(\omega) -  \E_\theta[ Y_i \,|\, \boldsymbol{X}(\omega) ]\right)
\intertext{for an output unit $i \in [N_{\rm in}+1,N]$,}
  b_i
  & \leftarrow
  b_i + \eta \, \left( \E_\theta[H_i \,|\, \boldsymbol{X}(\omega), \mathbf{Y}(\omega)]
	-  \E_\theta[ H_i \,|\, \boldsymbol{X}(\omega) ]\right)
\intertext{for a hidden unit $i \in [N+1,N+M]$,}
  w_{i,j}
  & \leftarrow
  w_{i,j} 
  + \eta \, \left( X_i(\omega) \, Y_j(\omega) - X_i(\omega)\,\E_\theta[Y_j\,|\,\boldsymbol{X}(\omega)] \right)
\intertext{for a pair of an input unit and an output unit $(i,j)\in[1,N_{\rm in}]\times[N_{\rm in}+1,N]$,}
  w_{i,j}
  & \leftarrow
  w_{i,j} 
  + \eta \, \left( X_i(\omega) \, \E_\theta\left[H_j\,|\,\boldsymbol{X}(\omega),\mathbf{Y}(\omega)\right] - X_i(\omega)\,\E_\theta[H_j\,|\,\boldsymbol{X}(\omega)] \right)
\intertext{for a pair of an input unit and a hidden unit $(i,j)\in[1,N_{\rm in}]\times[N+1,N+M]$,}
  w_{i,j}
  & \leftarrow
  w_{i,j} + \eta \, \left( Y_i(\omega) \, Y_j(\omega) - \E_\theta[Y_i\,Y_j\,|\,\boldsymbol{X}(\omega)] \right)
\intertext{for a pair of output units $(i,j)\in[N_{\rm in}+1,N-1]\times[i+1,N]$,}
  w_{i,j}
  & \leftarrow
  w_{i,j} 
  + \eta \, \left( Y_i(\omega) \, \E_\theta\left[H_j\,|\,\boldsymbol{X}(\omega),\mathbf{Y}(\omega)\right] - \E_\theta[Y_i\,H_j\,|\,\boldsymbol{X}(\omega)] \right)
\intertext{for a pair of an output unit and a hidden unit $(i,j)\in[N_{\rm in}+1,N]\times[N+1,N+M]$, and}
  w_{i,j}
  & \leftarrow
  w_{i,j} 
  + \eta \, \left( \E_\theta\left[H_i\,H_j\,|\,\boldsymbol{X}(\omega),\mathbf{Y}(\omega)\right] - \E_\theta[H_i\,H_j\,|\,\boldsymbol{X}(\omega)] \right)
\end{align}
for a pair of hidden units $(i,j)\in[N+1,N+M-1]\times[i+1,N+M]$.

We also obtain trivial learning rules:
\begin{align}
  b_i & \leftarrow b_i
\intertext{for an input unit $i \in[1,N_{\rm in}]$ and}
  w_{i,j} & \leftarrow w_{i,j}
\end{align}
for a pair of input units $(i,j)\in[1,N_{\rm in}-1]\times[i+1,N_{\rm 
in}]$.  One can easily see that these parameters are redundant and do 
not play any role in $\PP_\theta(\mathbf{y}\,|\,\mathbf{x})$.

The Hessian of the objective of discriminative learning follows from 
\eqref{eq:BM:hidden-summary-laplacian} and \eqref{eq:BM:discriminative-f-in-E}:
\begin{align}
  \nabla^2 f(\theta) 
  & = \E_{\rm target}\left[
	\Cov_\theta[\boldsymbol{S} \,|\, \boldsymbol{X}, \mathbf{Y} ]
	\right] 
	- \Cov_{\theta}[\boldsymbol{S}]
  -\left( \E_{\rm target}\left[
	\Cov_{\theta}[\boldsymbol{S} \,|\, \boldsymbol{X} ]
	\right] 
	- \Cov_{\theta}[\boldsymbol{S}]
	\right) \\
  & = \E_{\rm target}\left[
	\Cov_{\theta}[\boldsymbol{S} \,|\, \boldsymbol{X}, \mathbf{Y} ]
	- \Cov_{\theta}[\boldsymbol{S} \,|\, \boldsymbol{X} ]
	\right],
\end{align}
where $\Cov_\theta[\cdot\,|\,\boldsymbol{X},\mathbf{Y}]$ denotes the 
conditional covariance matrix with respect to the conditional 
distribution of the hidden values 
$\PP_\theta(\cdot\,|\,\boldsymbol{X},\mathbf{Y})$ given 
$(\boldsymbol{X},\mathbf{Y})$, and $\Cov_\theta[\cdot\,|\,\boldsymbol{X}]$ 
denotes the conditional covariance matrix with respect to the 
conditional distribution of the output and hidden values 
$\PP_\theta(\cdot\,|\,\boldsymbol{X})$ given $\boldsymbol{X}$.

\subsubsection{Summary}

Consider a Boltzmann machine with parameters 
$\theta=(\mathbf{b},\mathbf{W})$, where the units can be classified into 
input, output, and hidden. The Boltzmann machine defines a conditional 
probability distribution of the output values $\mathbf{y}$ given the 
input values $\mathbf{x}$:
\begin{align}
  \PP_\theta(\mathbf{y}\,|\,\mathbf{x})
& = \frac{
\displaystyle\sum_{\mathbf{\tilde h}} \PP_\theta(\mathbf{x},\mathbf{y},\mathbf{\tilde h})
}{
\displaystyle\sum_{\mathbf{\tilde y},\mathbf{\tilde h}} \PP_\theta(\mathbf{x},\mathbf{\tilde y},\mathbf{\tilde h})
},
\end{align}
where $\PP_\theta(\mathbf{x},\mathbf{y},\mathbf{h})$ is the 
probability distribution of the Boltzmann machine where 
$(\mathbf{x},\mathbf{y})$ is visible, and $\mathbf{h}$ is hidden.

The expected KL divergence from $\PP_\theta(\cdot\,|\,\boldsymbol{X})$ to 
$\PP_{\rm target}(\cdot\,|\,\boldsymbol{X})$, where the expectation is with 
respect to the target distribution of the input values $\boldsymbol{X}$, can 
be minimized (or the sum of the conditional log-likelihood of the output 
values given the input values, where the input and output follow the 
empirical distribution $\PP_{\rm target}$, can be maximized) by 
maximizing
\begin{align}
  f(\theta)
  \equiv
  \E_{\rm target}\left[ \log \PP_\theta(\boldsymbol{X},\mathbf{Y}) \right]
  -
  \E_{\rm target}\left[ \log \PP_\theta(\boldsymbol{X}) \right].
\end{align}
The gradient and the Hessian of $f(\theta)$ are given by
\begin{align}
  \nabla f(\theta)
  & = \E_{\rm target}\left[ \E_\theta[ \boldsymbol{S} \,|\, \boldsymbol{X},\mathbf{Y} ] \right]
	- \E_{\rm target}[\E_\theta[\boldsymbol{S} \,|\, \boldsymbol{X}]] 
  \label{eq:BM:disc:grad}\\
  \nabla^2 f(\theta) 
  & = \E_{\rm target}\left[
	\Cov_{\theta}[\boldsymbol{S} \,|\, \boldsymbol{X}, \mathbf{Y} ]
	\right] 
  - \E_{\rm target}\left[
	\Cov_{\theta}[\boldsymbol{S} \,|\, \boldsymbol{X} ]
	\right], 
\end{align}
where $\boldsymbol{S}$ denotes the vector of the random variables
representing the value of a unit ($U_i=X_i$ for $i\in[1,N_{\rm in}]$, $U_{N_{\rm in}+i}=Y_i$ for $i\in[1,N_{\rm out}]$, and $U_{N+i}=H_i$ for $i\in[1,M]$) or the product of the values of a
pair of units:
\begin{align}
  \boldsymbol{S} \equiv (U_1, \ldots, U_{N+M}, U_1\,U_2, \ldots, U_{N+M-1}\,U_{N+M}).
\end{align}

\subsection{Simplest case with relation to logistic regression}

Here, we study the simplest but non-trivial case of a discriminative 
model of a Boltzmann machine.  Specifically, we assume that the 
Boltzmann machine has no hidden units, and there are no weight between 
output units.  As we have discussed in 
Section~\ref{sec:BM:discriminative}, the bias associated with the 
input unit and the weight between input units are redundant and do 
not play any role.  Therefore, it suffices to consider the bias for 
output units and the weight between input units and output units (see 
Figure~\ref{fig:BM:simple}).  Let $\mathbf{b}^{\rm O}$ be the bias 
associated with output units and $\mathbf{W}^{\rm IO}$ be the weight 
matrix where the $(i,j)$ element denotes the weight between the $i$-th 
input unit and the $j$-th output unit.

\begin{figure}
\centering
\includegraphics[width=0.4\linewidth]{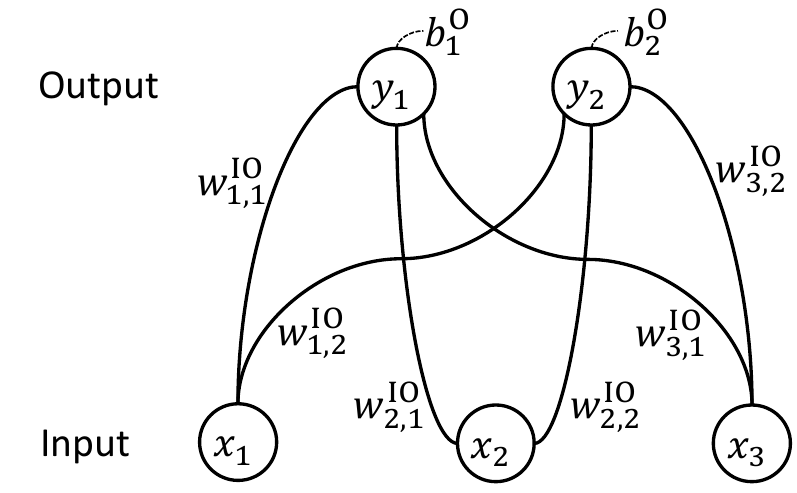}
\caption{A discriminative model with a Boltzmann machine with no hidden units and no weight between output units.}
\label{fig:BM:simple}
\end{figure}

\subsubsection{Conditional probability}

We first apply Lemma \ref{lemma:BM:conditional}, where the input units 
correspond to the visible units in the lemma, and the output units 
correspond to the hidden units in the lemma.  Then we can see that the 
probability distribution of output values given input values 
$\mathbf{x}$ is given by a Boltzmann machine with the following bias and no weight:
\begin{align}
\mathbf{b}(\mathbf{x})
 & \equiv \mathbf{b}^{\rm O} + (\mathbf{W}^{\rm IO})^\top \mathbf{x}.
\end{align}

The conditional probability of output values $\mathbf{y}$ given input values $\mathbf{x}$ is thus given by
\begin{align}
\PP_\theta(\mathbf{y}\,|\,\mathbf{x})
& = \frac{
\exp\left( (\mathbf{b}(\mathbf{x}))^\top \mathbf{y} \right)
}{
\displaystyle\sum_{\mathbf{\tilde y}} 
\exp\left( (\mathbf{b}(\mathbf{x}))^\top \mathbf{\tilde y} \right)
}.
\label{eq:BM:simple:conditional}
\end{align}

In this case, the partition function, which consists of $2^{N_{\rm 
out}}$ terms, can be computed in $O(N_{\rm out})$ time:
\begin{lemma}
The partition function (the denominator) in the right-hand side of 
\eqref{eq:BM:simple:conditional} can be written as follows:
\begin{align}
\sum_{\mathbf{\tilde y}} \exp\left( (\mathbf{b}(\mathbf{x}))^\top \mathbf{\tilde y} \right)
& = \prod_{i=1}^{N_{\rm out}} \left( 1 + \exp(b_i(\mathbf{x})) \right),
\end{align}
where the summation with respect to $\mathbf{\tilde y}$ is over all of the possible binary patterns of length $N_{\rm out}$.
\label{lemma:BM:partition}
\end{lemma}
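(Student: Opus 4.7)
The plan is to exploit the fact that the exponent $\mathbf{b}(\mathbf{x})^\top \mathbf{\tilde y} = \sum_{i=1}^{N_{\rm out}} b_i(\mathbf{x})\,\tilde y_i$ is additive in the coordinates of $\mathbf{\tilde y}$, so that the exponential factorizes across units. This is essentially the standard ``partition function of independent binary units'' identity, made possible here precisely because the Boltzmann machine obtained by conditioning (via Lemma~\ref{lemma:BM:conditional}, applied with input units as visible and output units as hidden) has no weights between output units.

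First I would rewrite the summand as a product:
\begin{align}
\exp\!\left( (\mathbf{b}(\mathbf{x}))^\top \mathbf{\tilde y} \right)
= \exp\!\left( \sum_{i=1}^{N_{\rm out}} b_i(\mathbf{x})\,\tilde y_i \right)
= \prod_{i=1}^{N_{\rm out}} \exp(b_i(\mathbf{x})\,\tilde y_i).
\end{align}
Next I would interchange the sum over all $2^{N_{\rm out}}$ binary vectors with the product, using the distributive law (equivalently, Fubini for finite sums):
\begin{align}
\sum_{\mathbf{\tilde y}\in\{0,1\}^{N_{\rm out}}} \prod_{i=1}^{N_{\rm out}} \exp(b_i(\mathbf{x})\,\tilde y_i)
= \prod_{i=1}^{N_{\rm out}} \sum_{\tilde y_i\in\{0,1\}} \exp(b_i(\mathbf{x})\,\tilde y_i).
\end{align}
Finally, each inner sum over $\tilde y_i\in\{0,1\}$ evaluates to $\exp(0) + \exp(b_i(\mathbf{x})) = 1 + \exp(b_i(\mathbf{x}))$, which yields the claimed product.

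There is essentially no hard step here; the only thing to be careful about is justifying the sum–product swap, which is immediate for a finite sum of products indexed by a Cartesian product of finite sets. A one-line inductive argument on $N_{\rm out}$ would also work if one prefers to avoid appealing to the distributive law in the general form. The key structural point to emphasize is that the factorization works only because $\mathbf{b}(\mathbf{x})$ is a linear functional of $\mathbf{\tilde y}$ with no quadratic interaction terms, which is exactly the setting created by the assumption that there are no weights between output units.
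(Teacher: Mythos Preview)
Your proof is correct and follows essentially the same approach as the paper: rewrite the exponent as a sum over coordinates, turn the exponential of a sum into a product, interchange the sum over binary vectors with the product, and evaluate each one-dimensional sum as $1+\exp(b_i(\mathbf{x}))$. The paper's proof is just these three lines without the additional commentary on the sum--product swap.
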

\begin{proof}
\begin{align}
\sum_{\mathbf{\tilde y}} \exp\left( (\mathbf{b}(\mathbf{x}))^\top \mathbf{\tilde y} \right)
& = \sum_{\mathbf{\tilde y}} \exp\left( \sum_{i=1}^{N_{\rm out}} b_i(\mathbf{x}) \, \tilde y_i \right) \\
& = \sum_{\mathbf{\tilde y}} \prod_{i=1}^{N_{\rm out}} \exp\left( b_i(\mathbf{x}) \, \tilde y_i \right) \\
& = \prod_{i=1}^{N_{\rm out}} \left( 1 + \exp(b_i(\mathbf{x})) \right).
\end{align}
\end{proof}

Now, it can be easily shown that the output values are conditionally 
independent of each other given the input values:
\begin{corollary}
The conditional probability of output values \eqref{eq:BM:simple:conditional} can be written as 
the product of the conditional probabilities of each output value:
\begin{align}
\PP_\theta(\mathbf{y}\,|\,\mathbf{x})
& = \prod_{i=1}^{N_{\rm out}} \PP_\theta(y_i\,|\,\mathbf{x}),
\intertext{where}
\PP_\theta(y_i\,|\,\mathbf{x})
& = \frac{
\exp\left( b_i(\mathbf{x}) \, y_i \right)
}{
1 + \exp\left( b_i(\mathbf{x}) \right)
}.
\label{eq:BM:Pyi}
\end{align}
\label{corollary:BM:independence}
\end{corollary}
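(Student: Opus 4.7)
The plan is to start directly from the closed-form conditional distribution in \eqref{eq:BM:simple:conditional} and show that both the numerator and the denominator factor as products indexed by the $N_{\rm out}$ output units; then the ratio factors coordinate-wise, which is exactly the claimed independence.

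More concretely, I will first rewrite the numerator $\exp\!\left((\mathbf{b}(\mathbf{x}))^\top \mathbf{y}\right)$ by expanding the inner product as $\sum_{i=1}^{N_{\rm out}} b_i(\mathbf{x})\,y_i$ and then using $\exp(\sum_i\cdot)=\prod_i\exp(\cdot)$, which yields
\[
\exp\!\left((\mathbf{b}(\mathbf{x}))^\top \mathbf{y}\right) = \prod_{i=1}^{N_{\rm out}} \exp\!\left(b_i(\mathbf{x})\,y_i\right).
\]
For the denominator I will invoke Lemma~\ref{lemma:BM:partition} directly to obtain
\[
\sum_{\mathbf{\tilde y}} \exp\!\left((\mathbf{b}(\mathbf{x}))^\top \mathbf{\tilde y}\right) = \prod_{i=1}^{N_{\rm out}} \bigl(1 + \exp(b_i(\mathbf{x}))\bigr).
\]
Dividing the two factored expressions coordinate by coordinate gives
\[
\PP_\theta(\mathbf{y}\,|\,\mathbf{x}) = \prod_{i=1}^{N_{\rm out}} \frac{\exp\!\left(b_i(\mathbf{x})\,y_i\right)}{1 + \exp(b_i(\mathbf{x}))},
\]
and identifying each factor with the right-hand side of \eqref{eq:BM:Pyi} completes the argument. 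A brief sanity check confirms that $\PP_\theta(y_i=0\,|\,\mathbf{x}) + \PP_\theta(y_i=1\,|\,\mathbf{x}) = 1$, so the factors are legitimate probability mass functions and the identification $\PP_\theta(y_i\,|\,\mathbf{x})$ is justified.

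There is essentially no obstacle here: the decisive step is the factorization of the partition function, which is already supplied by Lemma~\ref{lemma:BM:partition}, and the only other ingredient is the elementary identity $\exp(a+b)=\exp(a)\exp(b)$. The mild conceptual point to articulate is that once both numerator and denominator are products over $i$, one is entitled to interpret each factor as a marginal conditional probability rather than merely a ratio, which follows because each factor is nonnegative and sums to one over $y_i\in\{0,1\}$.
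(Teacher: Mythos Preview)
Your proposal is correct and follows essentially the same route as the paper: expand the inner product in the numerator, invoke Lemma~\ref{lemma:BM:partition} for the denominator, and read off the coordinate-wise factorization. The only difference is that you spell out the sanity check that each factor sums to one over $y_i\in\{0,1\}$, which the paper leaves implicit.
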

\begin{proof}
By Lemma~\ref{lemma:BM:partition}, we have
\begin{align}
\PP_\theta(\mathbf{y}\,|\,\mathbf{x})
& = \frac{\exp\left(\displaystyle\sum_{i=1}^{N_{\rm out}} b_i(\mathbf{x})\,y_i \right)}
	{\displaystyle\prod_{i=1}^{N_{\rm out}} \left( 1 + \exp(b_i(\mathbf{x})) \right)} \\
& = \prod_{i=1}^{N_{\rm out}} \PP_\theta(y_i\,|\,\mathbf{x}).
\end{align}
\end{proof}

\subsubsection{Relation to logistic regression}

By \eqref{eq:BM:Pyi}, the probability that the $i$-th output unit takes 
the value 1 is\begin{align}\PP_\theta(Y_i=1\,|\,\mathbf{x})& = 
\frac{1}{1 + \exp\left( - b_i(\mathbf{x}) \right)} \\& = \frac{1}{1 + 
\exp\left( - (b_i^{\rm O} + \mathbf{w}_i^\top \mathbf{x}) 
\right)},\end{align} where $\mathbf{w}_i$ is the $i$-th column of 
$\mathbf{W}^{\rm IO}$. The last expression has the form of logistic 
regression, where the explanatory variables $\mathbf{x}$ are binary.

Due to the conditional independence shown in 
Corollary~\ref{corollary:BM:independence}, we can say that the Boltzmann 
machine shown in Figure~\ref{fig:BM:simple} consists of $N$ independent 
models of logistic regression that have common explanatory variables.

\section{Evaluating expectation with respect to a model distribution}
\label{sec:BM:evaluate}

When we train a Boltzmann machine with a stochastic gradient method, we 
need to evaluate expected values with respect to the distribution 
defined by the Boltzmann machine.  Such expected values appear for 
example as $\E_\theta[\boldsymbol{S}]$ in \eqref{eq:BM:gradH-visible} and 
\eqref{eq:BM:hidden-summary-grad} or as 
$\E_\theta[\boldsymbol{S}\mid\boldsymbol{X}]$ in \eqref{eq:BM:disc:grad}.  
In general, exact expressions for these expectations are unavailable in closed forms.

\subsection{Gibbs sampler}

A practical approach that can be used to estimate those expectations is 
Markov Chain Monte Carlo in general and Glauber dynamics 
\cite{MarkovChainMixingTime} or Gibbs samplers \cite{GibbsSampler} in 
particular.  For example, we can sample $K$ patterns according to the 
distribution given by a Boltzmann machine via a Gibbs sampler shown in 
Algorithm~\ref{alg:BM:Gibbs}.  The expected values can then be estimated 
using the $K$ samples.  In practice, we often ignore initial samples and 
consider only every $n$th sample for a sufficiently large $n$.

\begin{algorithm}[t]
\begin{algorithmic}[1]
\State {\bf Input} Parameters of the Boltzmann machine and $K$
\State $\mathbf{x}^{(0)} \leftarrow $ Initialize the values of the $N$ units
\For{$k = 1, \ldots, K$}
 \For{$i = 1, \ldots, N$}
   \State Choose $x_i^{(k)}$ according to its conditional 
	distribution given $x_j^{(k-1)}$ for $j\neq i$
 \EndFor
\EndFor
\State {\bf Return} $K$ samples: $\mathbf{x}^{(k)}$ for $k=1,\ldots,K$.
\end{algorithmic}
\caption{A Gibbs sampler for a Boltzmann machine with $N$ units.}
\label{alg:BM:Gibbs}
\end{algorithm}

In Step 5 of Algorithm~\ref{alg:BM:Gibbs}, the 
conditional distribution of $x_i^{(k)}$ given $x_j^{(k-1)}$ for $j\neq i$ is defined by
\begin{align}
\PP_\theta(x_i^{(k)}\mid\mathbf{x}^{(k-1)})
& = \frac{\exp\big(-E_\theta(x_i^{(k)}\mid \mathbf{x}^{(k-1)})\big)}
   {\displaystyle\sum_{\tilde x_i^{(k)}\in\{0,1\}} \exp\big(-E_\theta(\tilde x_i^{(k)}\mid \mathbf{x}^{(k-1)})\big)},
\intertext{for $x_i^{(k)}\in\{0,1\}$, where}
E_\theta(x_i^{(k)}\mid \mathbf{x}^{(k-1)})
& \equiv - b_i \, x_i^{(k)} - \sum_{j\neq i} x_i^{(k)} \, w_{i,j} \, x_j^{(k-1)}.
\end{align}

\subsection{Contrastive divergence}
\label{sec:BM:CD}

Another approach to deal with the computational intractability of 
evaluation the expectations is to avoid it.  Namely, we modify our objective function.

Recall, from \eqref{eq:BM:KL} and \eqref{eq:BM:log-likelihood}, that our 
objective function has been the KL divergence from $\PP_\theta$ to 
$\PP_{\rm target}$ (or equivalently the log likelihood of data with 
respect to $\PP_\theta$).  The gradient of the KL divergence with respect 
to $\theta$ involves the computationally intractable term of expectation 
with respect to $\PP_\theta$.

Here, we study an alternative objective function of Contrastive 
divergence \cite{PoE,CD}.  Consider a Gibbs sampler 
(Algorithm~\ref{alg:BM:Gibbs}) that initializes the values by sampling 
from $\PP_{\rm target}$ (or uniformly at random from the dataset 
$\mathcal{D}$).  Because this is the distribution at the beginning of the 
Gibbs sampler, we write $\PP_0 \equiv \PP_{\rm target}$.  The 
distribution of the patterns sampled by the Gibbs sampler at step $k$ is 
referred to as $\PP_k^\theta$. Because $\PP_k^\theta \to \PP_\theta$ as $k\to\infty$, 
we also write $\PP_\infty^\theta\equiv\PP_\theta$.

The KL divergence in \eqref{eq:BM:KL} can now be written as
\begin{align}
{\rm KL}(\PP_0 \,||\, \PP_\infty^\theta)
& =   \sum_{\mathbf{\tilde x}}
  \PP_0(\mathbf{\tilde x})
  \log \PP_0(\mathbf{\tilde x})
  - \sum_{\mathbf{\tilde x}}
  \PP_0(\mathbf{\tilde x})
  \log \PP_\infty^\theta(\mathbf{\tilde x}),\\
\intertext{where}
\PP_\infty^\theta(\mathbf{x})
& = \frac{\exp(-E_\theta(\mathbf{x}))}
	{\displaystyle\sum_\mathbf{\tilde x} \exp(-E_\theta(\mathbf{\tilde x}))}
\end{align}
and its gradient (recall \eqref{eq:BM:grad-log}) as 
\begin{align}
\nabla_\theta {\rm KL}(\PP_0 \,||\, \PP_\infty^\theta)
& = \sum_\mathbf{\tilde x} \PP_0(\mathbf{\tilde x}) \,
	\nabla_\theta E_\theta(\mathbf{\tilde x})
- \sum_\mathbf{\tilde x} \PP_\infty^\theta(\mathbf{\tilde x}) \,
 \nabla_\theta E_\theta(\mathbf{\hat x}).
\label{eq:BM:KL0inf}
\end{align}
The first term of the right-hand side of \eqref{eq:BM:KL0inf} is the 
expectation with respect to the target distribution $\PP_0$, which is 
readily computable.  The second term is the expectation with respect to 
the model (with parameter $\theta$), which is in general computationally 
intractable for a Boltzmann machine.

To cancel out this computationally intractable second term, consider the 
following contrastive divergence:
\begin{align}
{\rm CD}_1(\theta) \equiv {\rm KL}(\PP_0 \,||\, \PP_\infty^\theta) - {\rm KL}(\PP_1^\theta \,||\, \PP_\infty^\theta).
\label{eq:BM:CD1}
\end{align}
Here, we quote an intuitive motivation of the contrastive divergence 
from \cite{PoE}, using our notations (shown within $[\cdot]$):
\begin{quote}
\it
The intuitive motivation for using this ``contrastive divergence'' is 
that we would like the Markov chain that is implemented by Gibbs 
sampling to leave the initial distribution $[\,\PP_0\,]$ over the visible 
variables unaltered. Instead of running the chain to equilibrium and 
comparing the initial and final derivatives we can simply run the chain 
for one full step and then update the parameters to reduce the tendency 
of the chain to wander away from the initial distribution on the first 
step. Because $[\,\PP_1^\theta\,]$ is one step closer to the equilibrium 
distribution than $[\,\PP_0\,]$, we are guaranteed that 
$[\,{\rm KL}(\PP_0 \,||\, \PP_\infty^\theta)\,]$ exceeds $[\,{\rm 
KL}(\PP_1^\theta \,||\, \PP_\infty^\theta)\,]$ unless $[\,\PP_0\,]$ equals $[\,\PP_1^\theta\,]$, 
so the contrastive divergence can never be negative. Also, for Markov 
chains in which all transitions have non-zero probability, $[\,\PP_0 = 
\PP_1^\theta\,]$ implies $[\,\PP_0 = \PP_1^\theta\,]$ so the contrastive divergence 
can only be zero if the model is perfect.
\end{quote}

The gradient of the second term of \eqref{eq:BM:CD1} can be derived as follows:
\begin{align}
{\rm KL}(\PP_1^\theta \,||\, \PP_\infty^\theta)
& =   \sum_{\mathbf{\tilde x}}
  \PP_1^\theta(\mathbf{\tilde x})
  \log \frac{\PP_1^\theta(\mathbf{\tilde x})}{\PP_\infty^\theta(\mathbf{\tilde x})} \\
\nabla_\theta {\rm KL}(\PP_1^\theta \,||\, \PP_\infty^\theta)
& =   \sum_{\mathbf{\tilde x}}
  \nabla_\theta \PP_1^\theta(\mathbf{\tilde x})
  \log \frac{\PP_1^\theta(\mathbf{\tilde x})}{\PP_\infty^\theta(\mathbf{\tilde x})}
+   \sum_{\mathbf{\tilde x}}
  \PP_1^\theta(\mathbf{\tilde x}) \,
	\nabla_\theta \log \PP_1^\theta(\mathbf{\tilde x})
  - \sum_{\mathbf{\tilde x}} \,
  \PP_1^\theta(\mathbf{\tilde x}) \,
	\nabla_\theta \log \PP_\infty^\theta(\mathbf{\tilde x}) \\
& =   \sum_{\mathbf{\tilde x}}
  \nabla_\theta \PP_1^\theta(\mathbf{\tilde x})
  \log \frac{\PP_1^\theta(\mathbf{\tilde x})}{\PP_\infty^\theta(\mathbf{\tilde x})}
+   \sum_{\mathbf{\tilde x}}
  \nabla_\theta \PP_1^\theta(\mathbf{\tilde x}) 
  - \sum_{\mathbf{\tilde x}} \,
  \PP_1^\theta(\mathbf{\tilde x}) \,
	\nabla_\theta \log \PP_\infty^\theta(\mathbf{\tilde x}) \\
& =   \sum_{\mathbf{\tilde x}}
  \nabla_\theta \PP_1^\theta(\mathbf{\tilde x})
\bigg(
	1
	+ \log \frac{\PP_1^\theta(\mathbf{\tilde x})}{\PP_\infty^\theta(\mathbf{\tilde x})}
\bigg)
 + \sum_{\mathbf{\tilde x}} \,
	\PP_1^\theta(\mathbf{\tilde x}) \,
	\nabla_\theta \E_\theta(\mathbf{\tilde x})
  - \sum_{\mathbf{\tilde x}} \,
	\PP_\infty^\theta(\mathbf{\tilde x})
	\nabla_\theta \E_\theta(\mathbf{\tilde x}),
\label{eq:BM:KL1inf}
\end{align}
where the last equality follows analogously to \eqref{eq:BM:grad-log}.

From \eqref{eq:BM:KL0inf}-\eqref{eq:BM:CD1} and \eqref{eq:BM:KL1inf}, we obtain
\begin{align}
\nabla_\theta {\rm CD}_1(\theta)
& = \sum_\mathbf{\tilde x} \PP_0(\mathbf{\tilde x}) \,
	\nabla_\theta E_\theta(\mathbf{\tilde x})
	- \sum_\mathbf{\tilde x} \PP_1^\theta(\mathbf{\tilde x}) \,
	\nabla_\theta E_\theta(\mathbf{\tilde x})
  - \varepsilon(\theta),
\label{eq:BM:gradCD}
\intertext{where}
\varepsilon(\theta)
& \equiv \sum_{\mathbf{\tilde x}}
  \nabla_\theta \PP_1^\theta(\mathbf{\tilde x})
\bigg(
	1
	+ \log \frac{\PP_1^\theta(\mathbf{\tilde x})}{\PP_\infty^\theta(\mathbf{\tilde x})}
\bigg).
\end{align}
Hinton has empirically shown that $\varepsilon(\theta)$ is small and recommended 
the approximation of $\varepsilon(\theta)\approx 0$ in \cite{PoE}.  
The first term of the right-hand side of \eqref{eq:BM:gradCD} is 
an expectation with respect to $\PP_0$ and can be readily evaluated.  
The second term is an expectation with respect to $\PP_1^\theta$ 
and can be estimated with the samples from the Gibbs sampler in one step.

In \cite{PoE}, $\varepsilon(\theta)$ is represented in the following equivalent form:
\begin{align}
\varepsilon(\theta)
& =
\sum_\mathbf{\tilde x} 
\nabla_\theta \PP_1^\theta(\mathbf{\tilde x}) \,
\frac{\partial}{\partial \PP_1(\mathbf{\tilde x})}
{\rm KL}(\PP_1 \,||\, \PP_\infty^\theta).
\end{align}

\subsubsection{Score matching (Fisher divergence)}

A particularly interesting objective function for energy-based models is
score matching \cite{ScoreMatching}, which has been subsequently used
for example in \cite{ZCLZ16,KinLe10,Hyv08,KLH09,SRBMF11,Vin11}.  Similar
to contrastive divergence, score matching is an objective function,
which we can avoid computationally intractable evaluation of expectation
with.

Specifically, Hyv\"arinen \cite{ScoreMatching} defines score matching as
\begin{align}
 {\rm FD}(\PP_{\rm target} \,||\, \PP_\theta)
 & = \int_\mathbf{x} \PP_{\rm target}(\mathbf{x})
 \left| \nabla_\mathbf{x} \log \PP_{\rm target}(\mathbf{x}) - \nabla_\mathbf{x} \log \PP_\theta(\mathbf{x}) \right|^2 \,
 d\mathbf{x},
\end{align}
which is also referred to as Fisher divergence \cite{Lyu09}.  Note that
the gradient is with respect to $\mathbf{x}$ and not $\theta$.  
Hyv\"arinen shows that
\begin{align}
 \theta^\star \equiv \argmin_\theta  {\rm FD}(\PP_{\rm target} \,||\, \PP_\theta)
\end{align}
is a consistent estimator \cite{ScoreMatching}.

A key property of score matching is that there is no need for calculating the partition function.  In particular,
\begin{align}
 \nabla_\mathbf{x} \log \PP_\theta(\mathbf{x})
 & = -\nabla_\mathbf{x} E_\theta(\mathbf{x}) - \nabla_\mathbf{x} \log \int_\mathbf{\tilde x} \exp(-E_\theta(\mathbf{\tilde x})) \, d\mathbf{\tilde x} \\
 & = -\nabla_\mathbf{x} E_\theta(\mathbf{x}).
\end{align}
Also, although $\nabla_\mathbf{\tilde x} \log \PP_{\rm
target}(\mathbf{x})$ might appear to be intractable, it has been shown
in \cite{ScoreMatching} that
\begin{align}
 \frac{1}{D} \sum_{\mathbf{x} \in {\cal D}} \sum_{i=1}^N
 \left(
 \frac{\partial \log \PP_\theta(\mathbf{x})^2}{\partial x_i^2}
 + \frac{1}{2}  \left( \frac{\partial \log \PP_\theta(\mathbf{x})}{\partial x_i} \right)^2
 \right)
 + {\rm const}
 \label{eq:FDestimator}
\end{align}
is asymptotically equivalent to ${\rm FD}(\PP_{\rm
target}\,||\,\PP_\theta)$ as $D\equiv|{\cal D}|\to\infty$, where ${\cal D}$
is the set of data (samples from $\PP_{\rm target}$), $N$ is the
dimension of $\mathbf{x}\in{\cal D}$, and ${\rm const}$ is the term
independent of $\theta$.

A limitation of the estimator in \eqref{eq:FDestimator} is that it
assumes among others that the variables $\mathbf{x}$ are continuous and
and $\PP_\theta(\cdot)$ is differentiable.  There has been prior work
for relaxing these assumptions.  For example, Hyv\"arinen studies an
extension for binary or non-negative data \cite{Hyv07}, and Kingma and
LaCun study an approach of adding Gaussian noise to data samples for
smoothness condition \cite{KinLe10}.

\subsection{A separate generator}

The learning rule that follows from maximization of loglikelihood (minimization of KL divergence) via a
gradient ascent method may be considered as decreasing the energy of
``positive'' or ``real'' samples that are generated according to a target
distribution $\PP_{\rm target}$ and increasing the energy of
``negative'' or ``fake'' samples that are generated according to the current model
(recall \eqref{eq:BM:grad-log}):
\begin{align}
\nabla_\theta f(\theta)
& = - \sum_\mathbf{\tilde x} \PP_{\rm target}(\mathbf{\tilde x}) \,
	\nabla_\theta E_\theta(\mathbf{\tilde x})
+ \sum_\mathbf{\tilde x} \PP_\theta(\mathbf{\tilde x}) \,
 \nabla_\theta E_\theta(\mathbf{\hat x}).
 \label{eq:minKL}
\end{align}
With this learning rule, one can let the model to be able to better
``discriminate between the positive examples from the original data and
the negative examples generated by sampling from the current density
estimate'' \cite{WZH03}.

Then an energy-based model may be considered as taking both the role of a
generator and the role of a discriminator in a generative adversarial
network (GAN) \cite{SchGAN,GAN}.  There is a line of work
\cite{DeepEnergyRL,DABHC17,ZML16,KimBen16} that prepares a separate
generator with an energy-based model.  If the separate generator allows
more efficient sampling than the energy-based model, the expectation
with respect to the distribution of the current generator ({\it i.e.},
$\sum_\mathbf{\tilde x} \PP_\theta(\mathbf{\tilde x}) \, \nabla_\theta
E_\theta(\mathbf{\hat x}$) in \eqref{eq:minKL}) can be more efficiently
evaluated \cite{KimBen16,DeepEnergyRL}.

\subsection{Mean-field Boltzmann machines}
\label{sec:BM:real:meanfield}

There are also approaches that avoid computationally expensive 
evaluation of expectation.  An example is a mean-field Boltzmann machine 
\cite{MeanFieldBM}, which can be used to approximate a Boltzmann machine. 
A mean-field Boltzmann machine ignores connections between units and 
chooses the real value $m_i$ for each unit $i$ in a way that the 
distribution defined through
\begin{align}
\QQ_\mathbf{m}(\mathbf{x}) & \equiv \prod_i m_i^{x_i} \, (1-m_i)^{1-x_i}
\end{align}
well approximates the distribution $\PP_\theta(\mathbf{x})$ of a 
Boltzmann machine in the sense of the KL divergence 
\cite{MeanFieldWelHin}.

\section{Other energy-based models}
\label{sec:BM:others}

Here we review stochastic models that are related to the Boltzmann machine.

\subsection{Markov random fields}

A Boltzmann machine is a Markov random field \cite{MRFbook} having a 
particular structure.  A Markov random field consists of a finite set of 
units similar to a Boltzmann machine.  Each unit of a Markov random 
field takes a value in a finite set.  The probability distribution of 
the values (configurations) of the Markov random field can be represented as
\begin{align}
\PP(\mathbf{x})
& = \frac{\exp(-E(\mathbf{x}))}
	{\displaystyle\sum_{\mathbf{\tilde x}} \exp(-E(\mathbf{\tilde x}))},
\end{align}
where the summation with respect to $\mathbf{\tilde x}$ is over all of
the possible configurations of the values in the finite set, for which
the Markov random field is defined.  The energy of a configuration is
defined as follows:
\begin{align}
E(\mathbf{x}) = \mathbf{w}^\top \, \boldsymbol{\phi}(\mathbf{x}),
\end{align}
where $\boldsymbol{\phi}(\mathbf{x})$ is a feature vector of $\mathbf{x}$.
A Markov random field is also called an undirected graphical model. 

\subsubsection{Boltzmann machine and Ising model}

A Markov random field is reduced to a Boltzmann machine when it has the 
following two properties. First, $\boldsymbol{\phi}(\mathbf{x})$ is a 
vector of monomials of degrees up to 2. Second, each unit takes a binary 
value.

An Ising model \cite{Ising} is essentially equivalent to a Boltzmann 
machine but the binary variable takes values in $\{-1,+1\}$.

\subsubsection{Higher-order Boltzmann machine}

A higher-order Boltzmann machine extends a Boltzmann machine by allowing 
$\boldsymbol{\phi}(\mathbf{x})$ to include monomials of degree greater 
than 2 \cite{HigherBM}. Each unit of a higher-order Boltzmann machine 
takes a binary value.


\subsection{Determinantal point process}

A determinantal point process (DPP) defines a probability distribution
over the subsets of a given ground set
\cite{DPP,DPP4ML,MR2018415,MR1989442}.  In our context, the ground set
$\mathcal{Y}$ can be considered as the set of all units:
\begin{align}
\mathcal{Y} & = \{1, 2, \ldots, N\}.
\end{align}
A subset $\mathcal{X}$ can be 
considered as a set of units that take the value 1:
\begin{align}
\mathcal{X} 
& = \{ i \mid x_i=1, i\in\mathcal{Y} \}.
\end{align}

A DPP can be characterized by a kernel $\mathbf{L}$, which is an 
$N\times N$ positive semi-definite matrix.  The probability that the 
subset $\mathcal{X}$ is selected ({\it i.e.}, the units in $\mathcal{X}$ 
take the value 1) is then given by
\begin{align}
\PP(\mathcal{X})
 = \frac{\det(\mathbf{L}_{\mathcal{X}})}{\det(\mathbf{L}+\mathbf{I})},
\label{eq:DPP}
\end{align}
where $\mathbf{L}_{\mathcal{X}}$ denotes the principal submatrix of 
$\mathbf{L}$ indexed by $\mathcal{X}$, and $\mathbf{I}$ is the $N\times 
N$ identity matrix.  A DPP can be seen as an energy-based model, whose energy is given 
by $E(\mathcal{X})=-\log\det(\mathbf{L}_{\mathcal{X}})$.

In general, the kernel $\mathbf{L}$ can be represented as
\begin{align}
\mathbf{L} &= \mathbf{B}^\top \, \mathbf{B}
\label{eq:L_factor}
\end{align} 
by the use of a $K\times N$ matrix $\mathbf{B}$, where $K$ is the rank 
of $\mathbf{L}$.  For $i\in[1,N]$, let $\mathbf{b}_i$ be the $i$-th 
column of $\mathbf{B}$, which may be understood as a feature vector of 
the $i$-th item (unit).  One can further decompose $\mathbf{b}_i$ into 
$\mathbf{b}_i = q_i \, \bm{\phi}_i$, where $||\bm{\phi}_i||=1$ and $q_i\ge 
0$.  Then we can write the $(i,j)$-th element of $\mathbf{L}$ as follows:
\begin{align}
\ell_{i,j}
 = (\mathbf{b}_i)^\top \, \mathbf{b}_j
 = q_i \, (\bm{\phi}_i)^\top \, \bm{\phi}_j \, q_j.
\end{align}
In particular,
\begin{align}
\PP(\{i\}) \sim \det(\mathbf{L}_{\{i\}}) = \ell_{i,i} = q_i^2,
\end{align}
so that $q_i$ can be understood as the "quality" of the $i$-th item.  
Specifically, given that exactly one item is selected, the conditional 
probability that the $i$-th item is selected is proportional to $q_i^2$.  Likewise,
\begin{align}
\PP(\{i, j\})
 \sim \det(\mathbf{L}_{\{i,j\}})
 = \det\left(\begin{array}{cc}
	q_i \, (\bm{\phi}_i)^\top \, \bm{\phi}_i \, q_i & q_i \, (\bm{\phi}_i)^\top \, \bm{\phi}_j \, q_j \\
	q_j \, (\bm{\phi}_j)^\top \, \bm{\phi}_i \, q_i & q_j \, (\bm{\phi}_j)^\top \, \bm{\phi}_j \, q_j
	\end{array}\right)
 = q_i^2 \, q_j^2 \, ( 1 - S_{i,j}^2),
\label{eq:DPP_Pij}
\end{align}
where
\begin{align}
S_{i,j} \equiv (\bm{\phi}_i)^\top \, \bm{\phi}_j
\end{align}
may be understood as the similarity between item $i$ and item $j$.  
Equation~\eqref{eq:DPP_Pij} implies that the similar items are unlikely 
to be selected together.  In general, a DPP tends to give high 
probability to diverse subsets of items having high quality.

Finally, we discuss a computational aspect of a DPP.  Let's compare the 
denominator of the right-hand side of \eqref{eq:DPP} against the 
corresponding denominator (partition function) of the Boltzmann machine 
in \eqref{eq:BM:prob}.  The partition function of the Boltzmann machine 
is a summation over $2^N$ terms, which suggest that we need $O(2^N)$ 
time to evaluate it.  On the other hand, the determinant of an $N\times 
N$ matrix can be computed in $O(N^3)$ time.

When $\mathbf{L}$ has a low rank ($K<N$), one can further reduce the 
computational complexity.  Let
\begin{align}
\mathbf{C} &= \mathbf{B} \, \mathbf{B}^\top
\end{align}
be the $K\times K$ positive semi-definite matrix, defined from 
\eqref{eq:L_factor}.  Because the eigenvalues 
$\{\lambda_1,\ldots,\lambda_K\}$ of $\mathbf{L}$ are eigenvalues of 
$\mathbf{C}$ and vice versa, we have
\begin{align}
\det(\mathbf{L}+\mathbf{I})
= \prod_{k=1}^K (\lambda_k+1)
= \det(\mathbf{C}+\mathbf{I}),
\end{align}
where the first identity matrix $\mathbf{I}$ is $N\times N$, and the 
second is $K\times K$.  Therefore, \eqref{eq:DPP} can be represented as follows:
\begin{align}
\mathcal{P}(\mathcal{X})
 & = \frac{\det(\mathbf{L}_{\mathcal{X}})}{\det(\mathbf{C}+\mathbf{I})}.
\label{eq:DPP-dual}
\end{align}
The denominator of this dual representation of a DPP can be evaluated in 
$O(K^3)$ time.

The DPP has been receiving increasing attention in machine learning, and 
various learning algorithms have been proposed in the literature 
\cite{EMFullRankDPP,LowRankDPP,kronDPP,DyDPP}.

\subsection{Gaussian Boltzmann machines}
\label{sec:BM:Gaussian}

Here we review energy based models that deal with real values with a particular 
focus on those models that are extended from Boltzmann machines.  
A standard approach to extend the Boltzmann machine to deal with real values is 
the use of a Gaussian unit \cite{GaussianBM,WRH04,GBRBM}.

\subsubsection{Gaussian Bernoulli restricted Boltzmann machines}

For real values $\mathbf{x}\in\RR^N$ and binary values
$\mathbf{h}\in\{0,1\}^M$, Krizhevsky studies the following energy
\cite{GBRBM}:
\begin{align}
E_\theta(\mathbf{x},\mathbf{h})
 = \sum_{i=1}^N \frac{(x_i-b_i^{\rm V})^2}{2\,\sigma_i^2}
	- \sum_{j=1}^M b_j^{\rm H} \, h_j
	- \sum_{i=1}^N \sum_{j=1}^M x_i \, \frac{w_{i,j}}{\sigma_i} \, h_j,
\label{eq:GBRBM-energy}
\end{align}
where $\theta\equiv(\mathbf{b}^{\rm V}, \mathbf{b}^{\rm H}, \mathbf{W}, 
\bm{\sigma})$ is the set of parameters.

Krizhevsky shows that the conditional probability of $\mathbf{x}$ given 
$\mathbf{h}$ has a normal distribution, and the conditional probability 
of $\mathbf{h}$ given $\mathbf{x}$ has a Bernoulli distribution 
\cite{GBRBM}.  Here, we re-derive them with our notations and in a simpler manner.
\begin{theorem}
Consider the energy given by \eqref{eq:GBRBM-energy}.  Then the elements 
of $\mathbf{x}$ are conditionally independent of each other given 
$\mathbf{h}$, and the elements of $\mathbf{h}$ are conditionally 
independent of each other given $\mathbf{x}$.  Also, the conditional probability density 
of $x_i$ given $\mathbf{h}$ is given by
\begin{align}
p_\theta^{(i)}(x_i\mid\mathbf{h})
& = \frac{1}{\sqrt{2\,\pi\,\sigma_i^2}} \exp\Bigg(
	- \frac{\Big(x_i - \big(b_i^{\rm V} + \sigma_i\sum_{j=1}^M w_{i,j}\,h_j\big)\Big)^2}{2\,\sigma_i^2}
	\Bigg)
\label{eq:GBRBM-gaussian}
\end{align}
for $x_i\in\RR$, and the conditional mass probability of $h_j$ for $h_j\in\{0,1\}$ given $\mathbf{x}$ is given by
\begin{align}
p_\theta^{(j)}(h_j\mid\mathbf{x})
& = \frac{
	\exp\bigg( \Big(b_j^{\rm H} + \sum_{i=1}^N x_i \, \frac{w_{i,j}}{\sigma_i} \Big) h_j \bigg)
	}{
	1 + \exp\Big(b_j^{\rm H} + \sum_{i=1}^N x_i \, \frac{w_{i,j}}{\sigma_i} \Big)
	}.
\label{eq:GBRBM-bernoulli}
\end{align}
\label{thrm:GBRBM}
\end{theorem}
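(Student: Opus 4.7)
The plan is to work directly from the definition $p_\theta(\mathbf{x},\mathbf{h}) \propto \exp(-E_\theta(\mathbf{x},\mathbf{h}))$ with the energy in \eqref{eq:GBRBM-energy}, and to exploit the crucial structural feature of this energy: it contains no $x_i\,x_{i'}$ terms and no $h_j\,h_{j'}$ terms — only the cross terms $x_i\,h_j$ couple the two layers. This is the ``restricted'' bipartite structure. Because of it, once we condition on $\mathbf{h}$, the exponent splits as $\sum_{i=1}^N f_i(x_i;\mathbf{h})$, and once we condition on $\mathbf{x}$, the exponent splits as $\sum_{j=1}^M g_j(h_j;\mathbf{x})$. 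The normalizer in each case therefore also factorizes coordinate-wise, which immediately yields the two conditional-independence statements. This reduces the theorem to a one-dimensional calculation in each direction.

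For the conditional density of $x_i$ given $\mathbf{h}$, I would collect the terms of $E_\theta$ involving $x_i$, namely
\begin{align}
\frac{(x_i - b_i^{\rm V})^2}{2\,\sigma_i^2} \; - \; x_i \sum_{j=1}^M \frac{w_{i,j}}{\sigma_i}\,h_j,
\end{align}
and complete the square in $x_i$. Writing $c_i(\mathbf{h}) \equiv \sum_{j=1}^M w_{i,j}\,h_j$, the expression becomes $\frac{1}{2\sigma_i^2}\bigl(x_i - (b_i^{\rm V} + \sigma_i\,c_i(\mathbf{h}))\bigr)^2$ plus terms that are constant in $x_i$. Those constants cancel between the numerator $\exp(-E_\theta)$ and the marginal normalizer $\int \exp(-E_\theta)\,dx_i$, and the Gaussian integral over $x_i\in\RR$ supplies the $\sqrt{2\pi\sigma_i^2}$ factor. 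What remains is exactly \eqref{eq:GBRBM-gaussian}.

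For the conditional mass of $h_j$ given $\mathbf{x}$, the derivation is even simpler because $h_j\in\{0,1\}$ implies $h_j^2 = h_j$, so no completion of the square is needed. Collecting the terms containing $h_j$ yields the linear expression $-\bigl(b_j^{\rm H} + \sum_{i=1}^N x_i\,w_{i,j}/\sigma_i\bigr) h_j$, and summing $\exp$ of its negation over $h_j\in\{0,1\}$ produces the normalizer $1 + \exp\bigl(b_j^{\rm H} + \sum_i x_i\,w_{i,j}/\sigma_i\bigr)$, giving \eqref{eq:GBRBM-bernoulli}.

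I do not anticipate a real obstacle here. The only step that requires any care is the completion-of-square bookkeeping and the explicit verification that the $x_i$-independent residue cancels against the marginal normalization — routine but worth writing out so the shifted mean $b_i^{\rm V} + \sigma_i\sum_j w_{i,j}\,h_j$ appears unambiguously. The conceptual content is simply that the bipartite structure forces the conditionals to factorize, after which Gaussian integration in one direction and binary summation in the other fill in the two marginal forms.
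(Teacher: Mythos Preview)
Your proposal is correct and follows essentially the same route as the paper: factorize $\exp(-E_\theta(\mathbf{x},\mathbf{h}))$ across one layer using the bipartite (restricted) structure to obtain conditional independence, then complete the square in $x_i$ to read off the Gaussian conditional \eqref{eq:GBRBM-gaussian} and sum over $h_j\in\{0,1\}$ to obtain the Bernoulli conditional \eqref{eq:GBRBM-bernoulli}. The paper's proof is slightly terser in that it writes the factorization as $\exp((\mathbf{b}^{\rm H})^\top\mathbf{h})\prod_i\exp(-E_\theta^{(i)}(x_i,\mathbf{h}))$ and then works up to proportionality, but the content is the same as what you outline.
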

\begin{proof}
First observe that
\begin{align}
\exp(-E_\theta(\mathbf{x},\mathbf{h}))
& = \exp( (\mathbf{b}^{\rm H})^\top \mathbf{h} )
 \, \prod_{i=1}^N \exp(-E_\theta^{(i)}(x_i,\mathbf{h})),
\intertext{where}
E_\theta^{(i)}(x_i,\mathbf{h})
& = \frac{(x_i-b_i^{\rm V})^2}{2\,\sigma_i^2}
	- \sum_{j=1}^M x_i \, \frac{w_{i,j}}{\sigma_i} \, h_j.
\end{align}
This means that the elements of $\mathbf{x}$ are conditionally 
independent of each other given $\mathbf{h}$. The conditional 
independence of the elements of $\mathbf{h}$ given $\mathbf{x}$ can be 
shown analogously.

Then we have
\begin{align}
p_\theta(\mathbf{x}\mid\mathbf{h})
& = \prod_{i=1}^N p_\theta^{(i)}(x_i\mid\mathbf{h}) \\
p_\theta(\mathbf{h}\mid\mathbf{x})
& = \prod_{j=1}^M p_\theta^{(j)}(h_j\mid\mathbf{x}),
\intertext{where}
p_\theta^{(i)}(x_i\mid\mathbf{h})
& \sim \exp\big(-E_\theta^{(i)}(x_i,\mathbf{h})\big) \\
& = \exp\Bigg(
	- \frac{x_i^2 - 2 \Big(b_i^{\rm V} + \sigma_i\sum_{j=1}^M w_{i,j}\,h_j + (b_i^{\rm V})^2\Big)}{2\,\sigma_i^2}
	\Bigg) \\
& \sim \exp\Bigg(
	- \frac{\Big(x_i - \big(b_i^{\rm V} + \sigma_i\sum_{j=1}^M w_{i,j}\,h_j\big)\Big)^2}{2\,\sigma_i^2}
	\Bigg)\\
p_\theta^{(j)}(h_j\mid\mathbf{x})
& \sim \exp\bigg( \Big(b_j^{\rm H} + \sum_{i=1}^N x_i \, \frac{w_{i,j}}{\sigma_i} \Big) h_j \bigg)
\end{align}
for $x_i \in \RR$ and $h_j\in\{0,1\}$.  By taking into 
account the normalization for the total probability to become 1, we 
obtain \eqref{eq:GBRBM-gaussian} and \eqref{eq:GBRBM-bernoulli}.
\end{proof}

One might wonder why the product $\sigma_i \, w_{i,j}$ appears in 
\eqref{eq:GBRBM-gaussian}, and the quotient $w_{i,j}/\sigma_i$ appears in 
\eqref{eq:GBRBM-bernoulli}.  We can shed light on these expressions by 
studying natural parameters as in \cite{WRH04}.  The natural parameters of the normal 
distribution with mean $\mu$ and standard deviation $\sigma$ are $\mu/\sigma^2$ 
and $-1/(2\,\sigma^2)$.  Hence, the natural parameters in \eqref{eq:GBRBM-gaussian} are
\begin{align}
\frac{b_i^{\rm V}}{\sigma_i^2} + \sum_{j=1}^M \frac{w_{i,j}}{\sigma_i} h_j
\mbox{   and   }
-\frac{1}{2\,\sigma_i^2}.
\end{align}
Likewise, the natural parameter in \eqref{eq:GBRBM-bernoulli} is
\begin{align}
b_j^{\rm H} + \sum_{i=1}^N x_i \, \frac{w_{i,j}}{\sigma_i}.
\end{align}
Therefore, only the quotient $w_{i,j}/\sigma_i$ appears in natural parameters.

\subsubsection{Spike and slab restricted Boltzmann machines}

Courville et al.\ study a class of particularly structured higher-order 
Boltzmann machines with Gaussian units, which they refer to as spike and 
slab RBMs \cite{SpikeSlab,SpikeSlab2}.  For example, the energy may be 
represented as follows:
\begin{align}
E_\theta(\mathbf{x},\mathbf{h},\mathbf{S})
& = \sum_{i=1}^N \frac{\lambda_i}{2} x_i^2
	+ \sum_{j=1}^M \sum_{k=1}^K \frac{\alpha_{j,k}}{2} s_{j,k}^2
	- \sum_{j=1}^N b_j \, h_j
	- \sum_{i=1}^N \sum_{j=1}^M \sum_{k=1}^K w_{i,j,k} \, x_i \, h_j \, s_{j,k},
\end{align}
where $\mathbf{x}$ denotes real-valued visible values, $\mathbf{h}$ 
denotes binary hidden values (``spikes''), $\mathbf{S}$ denotes 
real-valued ``slabs,'' and 
$\theta\equiv(\bm{\lambda},\bm{\alpha},\mathbf{W},\mathbf{b})$ denotes 
the parameters.  The term $w_{i,j,k} \, x_i \, h_j \, s_{j,k}$ 
represents a three way interactions.

Similar to Theorem~\ref{thrm:GBRBM}, one can show that the elements of 
$\mathbf{x}$ are conditionally independent of each other and have normal 
distributions given $\mathbf{h}$ and $\mathbf{S}$:
\begin{align}
p(x_i \mid \mathbf{h}, \mathbf{S})
& \sim \exp\bigg(
	-\frac{\lambda_i}{2}
	\Big(x_i - \frac{1}{\lambda_i} \sum_{j=1}^M \sum_{k=1}^K w_{i,j,k} \, h_j \, s_{j,k} \Big)^2
	\bigg)
\end{align}
for $x_i\in\RR$.  Likewise, the elements of $\mathbf{S}$ are conditionally independent of each other and have normal 
distributions given $\mathbf{x}$ and $\mathbf{h}$:
\begin{align}
p(s_{j,k} \mid \mathbf{x}, \mathbf{h})
& \sim \exp\bigg(
	-\frac{\alpha_{j,k}}{2}
	\Big(s_{j,k} - \frac{1}{\alpha_i} \sum_{i=1}^N w_{i,j,k} \, x_i \, h_j \Big)^2
	\bigg)
\end{align}
for $s_{j,k}\in\RR$.  Given $\mathbf{x}$ and $\mathbf{S}$, the elements
of $\mathbf{h}$ are conditionally independent of each other and have
Bernoulli distributions:
\begin{align}
p(h_j \mid \mathbf{x}, \mathbf{S})
& \sim \exp\bigg(
	\Big(
	b_j + \sum_{i=1}^N \sum_{k=1}^K w_{i,j,k} \, x_i \, s_{j,k}
	\Big) h_j
	\bigg)
\end{align}
for $h_j\in\{0,1\}$.

It has been argued and experimentally confirmed that spike and slab RBMs 
can generate images with sharper boundaries than those generated by 
models with binary hidden units \cite{SpikeSlab, SpikeSlab2}.

\subsection{Using expected values to represent real values}

Here, we discuss the approach of using expected values given by the 
probability distribution defined by a Boltzmann machine.  Because a unit 
of a Boltzmann machine takes a binary value, 0 or 1, the expected value 
is in $[0, 1]$.  With appropriate scaling, any closed interval can be 
mapped to $[0,1]$.

\subsubsection{Expected values in visible units}

Recall, from Section~\ref{sec:BM:generative}, that a Boltzmann machine 
with parameter $\theta\equiv(\mathbf{b},\mathbf{W})$ defines a 
probability distribution over binary values:
\begin{align}
  \PP_\theta(\mathbf{x})
  & =
  \frac{\exp\left( -E_\theta(\mathbf{x}) \right)}
  {\displaystyle\sum_{\mathbf{\tilde x}} \exp\left( -E_\theta(\mathbf{\tilde x}) \right)},
\intertext{where}
  E_\theta(\mathbf{x})
  & \equiv
  - \mathbf{b}^\top \mathbf{x} - \mathbf{x}^\top\mathbf{W}\,\mathbf{x}
\end{align}
for $\mathbf{x} \in \{0, 1\}^N$.  The expected values can then be given as
\begin{align}
\E_\theta[\boldsymbol{X}] 
& = \sum_{\mathbf{\tilde x}} \mathbf{x} \, \PP_\theta(\mathbf{x}),
\end{align}
which take values in $[0,1]^N$.

A question is whether the values given by the expectation are suitable 
for a particular purpose.  In other words, how should we set $\theta$ so 
that the expectation gives suitable values for the purpose under 
consideration?

Here, we discuss a particular learning rule of simply using the values 
in $[0,1]^N$ in the learning rules for binary values.  In particular, 
a gradient ascent method for generative learning without hidden units 
is given by \eqref{eq:BM:grad_bi} and \eqref{eq:BM:grad_wij}.  A stochastic gradient method 
with mini-batches is then given by
\begin{align}
  b_i
  & \leftarrow
  b_i + \eta \, \left(\frac{1}{K}\sum_{k=1}^K X_i(\omega_k) -  \E_\theta[X_i]\right) \\
  w_{i,j}
  & \leftarrow
  w_{i,j} + \eta \, \left(\frac{1}{K}\sum_{k=1}^K X_i(\omega_k)\,X_j(\omega_k) -  \E_\theta[X_i\,X_j]\right),
\end{align}
where we take $K$ samples, $\boldsymbol{X}(\omega_1), \ldots, \boldsymbol{X}(\omega_K)$, at each 
step of the mini-batch stochastic gradient method.  As $K\to\infty$, these learning rules converge to 
the gradient ascent method given by \eqref{eq:BM:grad_bi} and \eqref{eq:BM:grad_wij}.  
If the real values under consideration are actually expectation of some binary random variables, 
we can justify a stochastic gradient method of updating $b_i$ based on a sampled real value $R_i(\omega)$ 
according to 
\begin{align}
  b_i
  & \leftarrow
  b_i + \eta \, \left(R_i(\omega) -  \E_\theta[X_i]\right).
\end{align}
However, the corresponding update of $w_{i,j}$ cannot be justified,
because $\E_\theta[X_i\,X_j]$ cannot be represented solely by
$\E_\theta[X_i]$ and $\E_\theta[X_j]$ unless $X_i$ and $X_j$ are
independent.  If $X_i$ and $X_j$ are independent, we should have
$w_{i,j}=0$.

In general, the use of expected values can be justified only for the
special case where the corresponding random variables are
conditionally independent of each other given input values.  The
simple discriminative model in Figure~\ref{fig:BM:simple} is an
example of such a special case.  In this case, one can apply a
stochastic gradient method of updating the parameters as follows:
\begin{align}
  b_j
  & \leftarrow
  b_j + \eta \, \left(R_j(\omega) -  \E_\theta[Y_j\,|\,\boldsymbol{X}(\omega)]\right) \\
  w_{i,j}
  & \leftarrow
  b_{i,j}
  + \eta \, \left( X_i(\omega) \, R_j(\omega) - X_i(\omega) \, \E_\theta[Y_j\,|\,\boldsymbol{X}(\omega)]\right)
\end{align}
for a sampled pair $(X_i(\omega), R_j(\omega))$, where $X_i(\omega)$
is an input binary value for $i\in[1,N_{\rm in}]$, and $R_j(\omega)$
is an output real value in $[0,1]$ for $j\in[1,N_{\rm out}]$.

\subsubsection{Expected values in hidden units}

Expected values are more often used for hidden units \cite{TRBM, RTRBM, 
ExpectedEnergyRL, BidirectionalDyBM} than for visible units.

Consider a Boltzmann machine with visible units and hidden units, which
have no connections between visible units or between hidden units
(namely, a restricted Boltzmann machine or RBM; see
Figure~\ref{fig:RBM}).  Let $\mathbf{b}^{\rm H}$ be the bias associated
with hidden units, $\mathbf{b}^{\rm V}$ be the bias associated with
visible units, and $\mathbf{W}$ be the weight between visible units and
hidden units. Then, given the visible values $\mathbf{x}$, the hidden
values $\mathbf{h}$ are conditionally independent of each other, and we
can represent the conditional expected value of the $j$-th hidden unit
as follows:
\begin{align}
m_j(\mathbf{x}) & = \frac{1}{1+\exp\big(-b_j(\mathbf{x}) \big)},
\label{eq:BM:mj}
\intertext{where}
b_j(\mathbf{x}) 
& \equiv b_j^{\rm H} + \mathbf{x}^\top \mathbf{W}_{:,j}.
\end{align}

\begin{figure}
\centering
\includegraphics[width=0.5\linewidth]{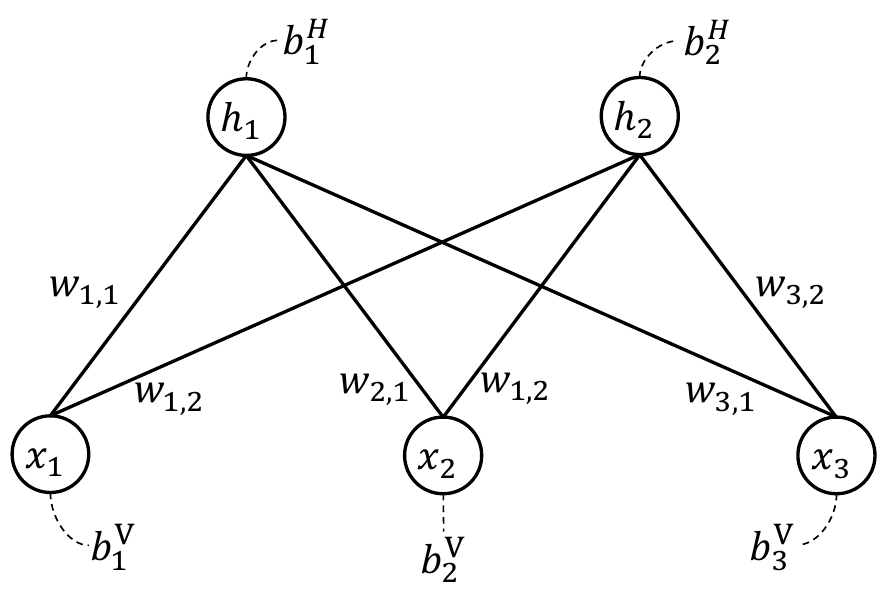}
\caption{A restricted Boltzmann machine}
\label{fig:RBM}
\end{figure}

Because the energy is a linear function of $\mathbf{h}$, we can represent 
the expected energy, where the expectation is with respect to the conditional 
distribution of the hidden values $\boldsymbol{H}$ given the visible values $\mathbf{x}$, 
using the conditional expected values of hidden units:
\begin{align}
\E[ E_\theta(\mathbf{x},\boldsymbol{H}) ]
& = - (\mathbf{b}^{\rm V})^\top \mathbf{x}
	- \mathbf{x}^\top \mathbf{W} \, \mathbf{m}
	-  (\mathbf{b}^{\rm H})^\top \mathbf{m} \\
& = - (\mathbf{b}^{\rm V})^\top \mathbf{x}
	- \mathbf{b}(\mathbf{x})^\top \mathbf{m}(\mathbf{x}).
\label{eq:expected-energy}
\end{align}
Notice that the distribution of visible values is then given by
\begin{align}
\PP_\theta(\mathbf{x})
& = \frac{1}{1+\exp\Big(-\E\big[ E_\theta(\mathbf{x},\boldsymbol{H}) \big] \Big)}.
\end{align}

This expected energy may be compared against the corresponding free energy:
\begin{align}
F_\theta(\mathbf{x})
& = - \log \sum_{\mathbf{\tilde h}} \exp( - E_\theta(\mathbf{x}, \mathbf{\tilde h})),
\end{align}
where
\begin{align}
E_\theta(\mathbf{x}, \mathbf{\tilde h}) 
& = - (\mathbf{b}^{\rm V})^\top \mathbf{x}
	- (\mathbf{b}^{\rm H})^\top \mathbf{\tilde h}
	- \mathbf{x}^\top \mathbf{W} \, \mathbf{\tilde h}.
\end{align}
The free energy can be represented as follows:
\begin{align}
F_\theta(\mathbf{x})
 & = - \log \exp\big( (\mathbf{b}^{\rm V})^\top \mathbf{x} \big)
	\sum_{\mathbf{\tilde h}} \exp\big( \mathbf{b}(\mathbf{x})^\top \mathbf{\tilde h}\big) \\
 & = - (\mathbf{b}^{\rm V})^\top \mathbf{x}
	- \log \prod_{j=1}^M \big( 1 + \exp( b_j(\mathbf{x}) ) \big) \\
 & = - (\mathbf{b}^{\rm V})^\top \mathbf{x}
	- \sum_{j=1}^M \log\big( 1 + \exp( b_j(\mathbf{x}) )\big).
\label{eq:free-energy-RBM}
\end{align}

\begin{theorem}
\begin{align}
\E_\theta[ E_\theta(\mathbf{x},\boldsymbol{H}) ]
 & = F_\theta(\mathbf{x}) - \E_\theta[\log \PP_\theta(\boldsymbol{H}\mid \mathbf{x})],
\end{align}
where $-\E_\theta[\log \PP_\theta(\boldsymbol{H}\mid \mathbf{x})]$ 
is the entropy of the conditional distribution of hidden values $\boldsymbol{H}$ given visible values $\mathbf{x}$.
\end{theorem}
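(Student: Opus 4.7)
The plan is to exploit the fact that the conditional distribution $\PP_\theta(\mathbf{h}\mid \mathbf{x})$ is itself a Boltzmann-type distribution whose log can be expressed in terms of the energy and the free energy. This turns the identity into a simple rearrangement, with the expectation essentially factoring through because $F_\theta(\mathbf{x})$ does not depend on $\mathbf{h}$.

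Concretely, I would first recall the definition of free energy,
\begin{align}
F_\theta(\mathbf{x}) = -\log \sum_{\mathbf{\tilde h}} \exp(-E_\theta(\mathbf{x},\mathbf{\tilde h})),
\end{align}
so that $\sum_{\mathbf{\tilde h}} \exp(-E_\theta(\mathbf{x},\mathbf{\tilde h})) = \exp(-F_\theta(\mathbf{x}))$. Combining this with the expression for the conditional probability in \eqref{eq:BM:conditional-probability} gives
\begin{align}
\PP_\theta(\mathbf{h}\mid \mathbf{x}) = \exp\bigl( -E_\theta(\mathbf{x},\mathbf{h}) + F_\theta(\mathbf{x}) \bigr),
\end{align}
and hence, taking logarithms,
\begin{align}
E_\theta(\mathbf{x},\mathbf{h}) = F_\theta(\mathbf{x}) - \log \PP_\theta(\mathbf{h}\mid \mathbf{x}).
\end{align}

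The final step is to take the conditional expectation of both sides over $\boldsymbol{H}$ given $\mathbf{x}$. Since $F_\theta(\mathbf{x})$ is a constant with respect to that expectation, it passes through unchanged, yielding exactly the claimed identity. There is essentially no obstacle here; the only thing to be careful about is keeping the signs straight when moving between $F_\theta$, $\log Z_\mathbf{x}$, and $\log \PP_\theta(\mathbf{h}\mid\mathbf{x})$. It is also worth remarking, as the statement itself already notes, that $-\E_\theta[\log \PP_\theta(\boldsymbol{H}\mid \mathbf{x})]$ is the Shannon entropy of the conditional distribution, so the identity can be read as the familiar thermodynamic relation $\langle E\rangle = F + T\,S$ with $T=1$.
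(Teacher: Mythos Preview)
Your proof is correct and, in fact, cleaner and more general than the paper's. You use only the definition of free energy and the form \eqref{eq:BM:conditional-probability} of $\PP_\theta(\mathbf{h}\mid\mathbf{x})$ to obtain the pointwise identity $E_\theta(\mathbf{x},\mathbf{h}) = F_\theta(\mathbf{x}) - \log \PP_\theta(\mathbf{h}\mid\mathbf{x})$, then average over $\boldsymbol{H}$. This argument makes no use of the RBM structure and would apply verbatim to any Boltzmann machine with hidden units, or indeed to any Gibbs distribution.

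The paper, by contrast, works entirely inside the RBM setting: it invokes Corollary~\ref{corollary:BM:independence} to factor $\PP_\theta(\boldsymbol{H}\mid\mathbf{x})$ into independent Bernoulli terms, computes the entropy explicitly as $\mathbf{b}(\mathbf{x})^\top\mathbf{m}(\mathbf{x}) - \sum_j \log(1+\exp(b_j(\mathbf{x})))$, and then adds this to the explicit expected-energy formula \eqref{eq:expected-energy} and matches the sum against the closed-form free energy \eqref{eq:free-energy-RBM}. What the paper's approach buys is that it makes the individual closed-form expressions visible, which feeds directly into the subsequent comparison in Figure~\ref{fig:expected-energy}; what your approach buys is a one-line structural proof that isolates the identity as nothing more than the thermodynamic relation $\langle E\rangle = F + S$ at unit temperature, independent of any particular architecture.
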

\begin{proof}
By Corollary \ref{corollary:BM:independence}, hidden values are conditionally independent of each other given visible values:
\begin{align}
\log \PP_\theta(\boldsymbol{H}\mid \mathbf{x})
& = \sum_{j=1}^M \log \PP_\theta(H_j\mid \mathbf{x}).
\end{align}
Then, using the notation in \eqref{eq:BM:mj}, we obtain
\begin{align}
E_\theta[\log \PP_\theta(\boldsymbol{H}\mid \mathbf{x})]
& = \sum_{j=1}^M \bigg(
	m_j(\mathbf{x}) \, \log m_j(\mathbf{x}) 
	+ (1 - m_j(\mathbf{x})) \, \log (1 - m_j(\mathbf{x})) 
	\bigg) \\
& = \sum_{j=1}^M
\bigg(
m_j \log\frac{\exp(b_j(\mathbf{x}))}{1+\exp(b_j(\mathbf{x}))}
+ (1-m_j) \log\frac{1}{1+\exp(b_j(\mathbf{x}))}
\bigg) \\
& = 
\mathbf{b}(\mathbf{x})^\top \mathbf{m}(\mathbf{x})
- \sum_{j=1}^M \log(1 + \exp(b_j(\mathbf{x}))).
\label{eq:BM:entropy}
\end{align}
The theorem now follows by adding \eqref{eq:expected-energy} and \eqref{eq:BM:entropy}, 
comparing it against \eqref{eq:free-energy-RBM}.
\end{proof}

Figure~\ref{fig:expected-energy} compares expected energy and free energy.  
Specifically, the blue curve shows the value of
\begin{align}
\frac{b_j^{\rm H} + \mathbf{x}^\top \mathbf{W}_{:,j}}{1+\exp\big(-b_j^{\rm H} - \mathbf{x}^\top \mathbf{W}_{:,j}\big)},
\end{align}
which appears in the expression of expected energy \eqref{eq:expected-energy},
and the green curve shows the value of
\begin{align}
\log\big( 1 + \exp( b_j^{\rm H} + \mathbf{x}^\top\mathbf{W}_{:,j})\big),
\end{align}
which appears in the expression of free energy
\eqref{eq:free-energy-RBM}.  The difference between the two curves is
largest ($\log 2\approx 0.30$) when $b_j^{\rm
H}+\mathbf{x}^\top\mathbf{W}_{:,j}=0$.  The two curves are essentially
indistinguishable when $|b_j^{\rm H}+\mathbf{x}^\top\mathbf{W}_{:,j}|$
is sufficiently large.  This suggests that the Boltzmann machine with
expected energy is different from the corresponding Boltzmann machine
(with free energy), but the two models have some similarity.

\begin{figure}[t]
\centering
\includegraphics[width=0.5\linewidth]{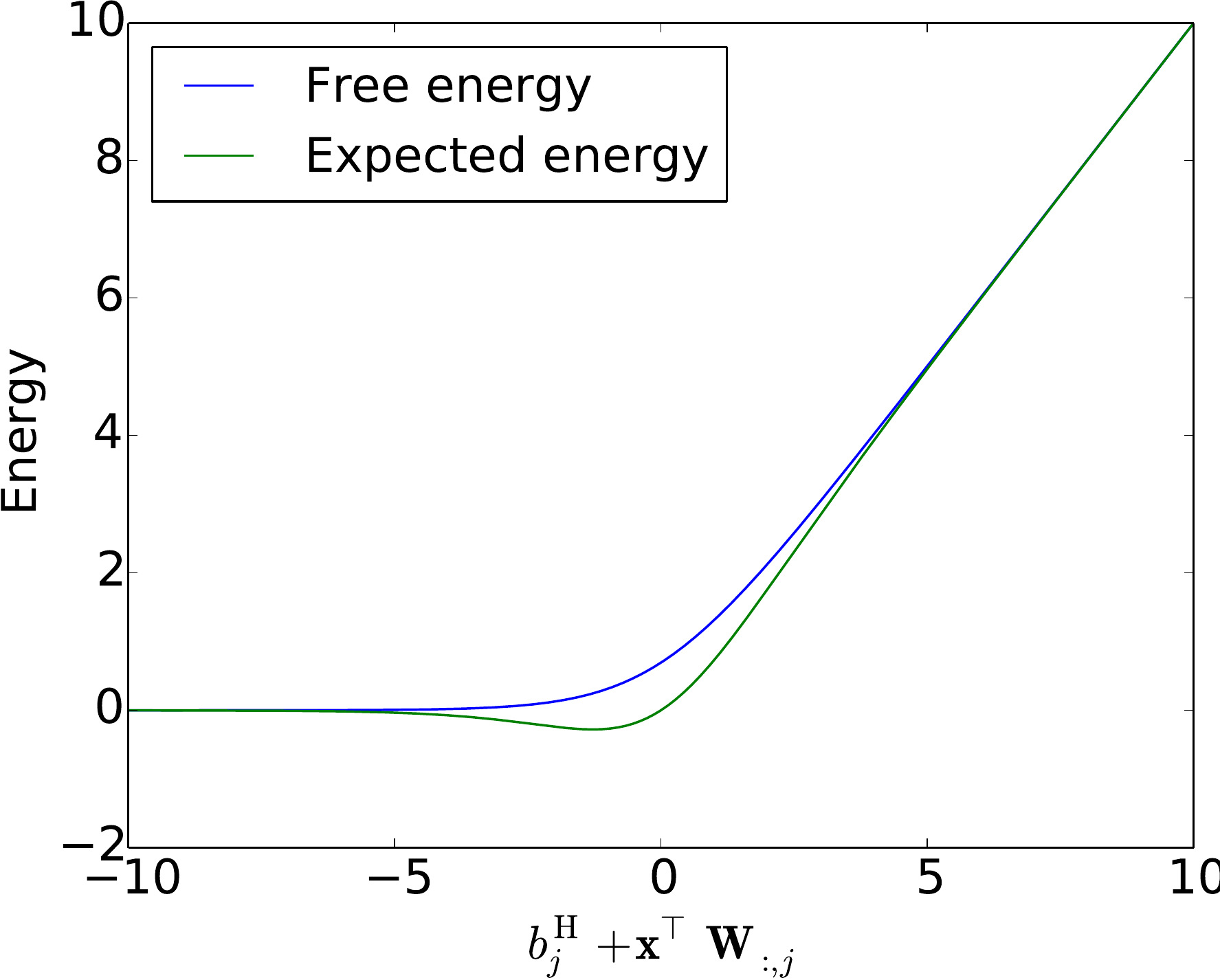}
\caption{Comparison between expected energy 
\eqref{eq:expected-energy} and free energy 
\eqref{eq:free-energy-RBM} associated with the $j$-th hidden unit 
of a restricted Boltzmann machine.}
\label{fig:expected-energy}
\end{figure}

\section{Non-probabilistic energy-based models}
\label{sec:BM:energy}

So far, we have studied probabilistic models.  A difficulty that we 
often face with probabilistic models is in high computational complexity 
for evaluating the partition function, or the normalization for the 
total probability to become 1. The probabilistic models that we have 
studied can be turned into (non-probabilistic) energy-based models.  
Such energy-based models do not require normalization, which is 
attractive from computational point of view.  A difficulty in 
energy-based models is in designing appropriate objective functions.  In this 
section, we will study (non-probabilistic) energy-based models through 
an example \cite{OLM07}.  For more details, see a tutorial by LeCun 
\cite{LeCun06}.

\subsection{Objective functions for energy-based models}

In learning an energy-based model, one needs to carefully design an objective (loss) 
function \cite{LeCun06,EBMloss} in a way that minimizing the objective 
function leads to desired results.  We can then optimize the parameters 
$\theta$ of the energy-based model by minimizing the objective function.  
The energy-based model with the optimized parameters $\theta$ should 
give low energy to desirable values of the variables of the energy-based 
model and high energy to other values.

Here, we consider energy-based models with input and output.  Let 
$E_\theta(\mathbf{x},y)$ be the energy for a pair of an input 
$\mathbf{x}$ and an output $y$.  An energy-based model with parameter 
$\theta$ gives
\begin{align}
y^\star = \argmin_y E_\theta(\mathbf{x},y)
\end{align}
as the output for input $\mathbf{x}$.  A desirable pair of input and 
output should give lower energy than undesirable ones.

When we optimize an energy-based model, minimizing the energy of a given 
data is usually inappropriate.  In particular, such an objective function may 
be unbounded.  It may not distinguish two patterns, one is good and the 
other is very good, as both of the two patterns have the minimum energy.

An objective function of an energy-based model should have a contrastive 
term, which naturally appear in the objective function of a 
probabilistic model ({\it i.e.}, the KL divergence or the log likelihood). 
 For example, recall from \eqref{eq:BM:prob} that the average negative 
log likelihood of a set of patterns $\mathcal{D}$ with respect to a 
Boltzmann machine is given by
\begin{align}
- \frac{1}{|\mathcal{D}|} \sum_{\mathbf{x}\in\mathcal{D}} \log \PP_\theta(\mathbf{x})
& = 
\frac{1}{|\mathcal{D}|} \sum_{\mathbf{x}\in\mathcal{D}} E_\theta(\mathbf{x})
- \log \sum_{\mathbf{\tilde x}} \exp(-E_\theta(\mathbf{\tilde x})).
\label{eq:energy:LL}
\end{align}
The second term of the right-hand side of \eqref{eq:energy:LL} is a 
contrastive term.  In particular, to minimize this objective function, we should not 
only reduce the energy of the patterns in $\mathbf{D}$ but also increase 
the energy of the patterns not in $\mathbf{D}$.  Recall also 
Figure~\ref{fig:BM:SGD}.  However, the contrastive term involves the 
summation over exponentially many ($2^N$) patterns, and is the source of 
computational intractability.  In designing an objective function for an 
energy-based model, we want to design a contrastive term that can be 
more efficiently evaluated.  We will see an example in the following.

\subsection{An example of face detection with pose estimation}

Osadchy et al.\ study an energy-based approach for classifying images 
into ``face'' or ``non-face'' and estimating the facial pose at the same 
time \cite{OLM07}.  Let $\mathbf{x}$ be a vector representing an image, 
$y$ be a variable indicating ``face'' or ``non-face,'' and $\mathbf{z}$ 
be a vector representing a facial pose.  They consider an energy 
function of the following form:
\begin{align}
E_\theta( y, \mathbf{z}, \mathbf{x} )
& = y \, || G_\theta(\mathbf{x}) - F(\mathbf{z}) || + (1-y) \, T,
\label{eq:face_energy}
\end{align}
where $G_\theta(\cdot)$ is a convolutional neural network (CNN), having 
parameter $\theta$, that maps an $\mathbf{x}$ into a lower dimensional 
vector, $F(\cdot)$ is an arbitrarily defined function that maps a 
$\mathbf{z}$ on to a manifold embedded in the low dimensional space of 
the output of the CNN, and $T$ is a constant (see Figure~\ref{fig:BM:face-arch}).

\begin{figure}
\centering
\includegraphics[width=\linewidth]{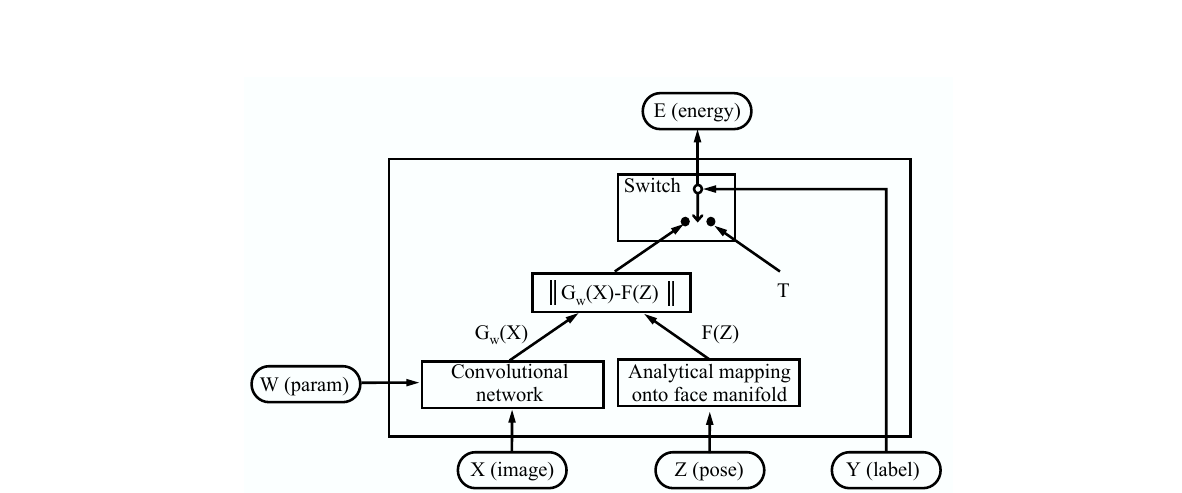}
\caption{An architecture of the energy \eqref{eq:face_energy}: Figure 4 from \cite{OLM07}, where $W$ is used to denote parameters $\theta$.}
\label{fig:BM:face-arch}
\end{figure}

An image $\mathbf{x}$ is classified as ``face'' ($y=1$) if
\begin{align}
\min_\mathbf{\tilde z} E_\theta( 1, \mathbf{\tilde z}, \mathbf{x} )
<
\min_\mathbf{\tilde z} E_\theta( 0, \mathbf{\tilde z}, \mathbf{x} ),
\end{align}
which is equivalent to
\begin{align}
\min_\mathbf{\tilde z} || G_\theta(\mathbf{x}) - F(\mathbf{\tilde z}) ||
< T.
\end{align}

We want to learn the parameters $\theta$ from a given training dataset.  
A training dataset $\mathcal{D}$ consists of two subsets.  The first 
subset $\mathbf{D}_1$ includes facial images with their facial poses 
$(\mathbf{z},\mathbf{x})$.  The second subset $\mathbf{D}_2$ includes 
non-facial images $\mathbf{x}$.

It would be computationally intractable to maximize the log-likelihood 
of the training dataset with respect to the probability distribution 
that could be defined with the energy $\eqref{eq:face_energy}$ through
\begin{align}
\PP_\theta(y,\mathbf{z},\mathbf{x}) 
\sim
\exp(-E_\theta(y,\mathbf{z},\mathbf{x})).
\end{align}

Osadchy et al.\ instead minimizes the following objective function \cite{OLM07}:
\begin{align}
L_\theta(\mathcal{D})
& = \frac{1}{|\mathcal{D}_1|} 
	\sum_{(\mathbf{z},\mathbf{x})\in{\cal D}_1} 
	E_\theta(1,\mathbf{z},\mathbf{x})^2
+ \frac{\kappa}{\mathcal{D}_2|} 
	\sum_{\mathbf{x}\in{\cal D}_2}
	\exp\Big(-\min_\mathbf{\tilde z} E_\theta(1,\mathbf{\tilde z},\mathbf{x})\Big),
\label{eq:loss_face}
\end{align}
where $\kappa$ is a positive constant (hyper-parameter).  The first term of 
the right-hand side of \eqref{eq:loss_face} is the average squared 
energy of the ``face'' samples.  Hence, minimizing the energy of 
``face'' samples leads to minimizing the objective function.  The second 
(contrastive) term involves the energy of ``non-face'' samples when they 
are classified as ``face'' ($y=1$), where the face pose is set optimally 
so that the energy is minimized.  By minimizing the second term, one can 
expect to make the energy of those ``non-face'' samples with the face 
label ($y=1$) greater than the corresponding energy with the non-face 
label ($y=0$).

\section{Conclusion}
\label{sec:BM:conclusion}

We have reviewed basic properties of Boltzmann machines with a 
particular focus on those that are relevant for gradient-based 
approaches to learning their parameters.  As it turns out that exact 
learning is in general intractable, we have discussed general approaches 
to approximation as well as tractable alternative, namely energy-based 
models.

This paper has covered limited perspectives of Boltzmann machines and 
energy-based models. For example, we have only briefly discussed restricted 
Boltzmann machines (RBMs), which deserves intensive review in its own.  
In fact, RBMs and deep Boltzmann machines \cite{DBM} are the topics that 
are covered in the second part of an IJCAI-17 
tutorial\footnote{https://researcher.watson.ibm.com/researcher/view\_gro 
up.php?id=7834}. This paper does not cover Boltzmann machines for 
time-series, which are reviewed in a companion paper \cite{BMsurvey3}.

The use of energy is also restrictive in this paper and in line with the
tutorial by LeCun \cite{LeCun06}. Our perspective on energy-based
learning in the tutorial is, however, broader than what is suggested by
LeCun.  We may use energy for other purposes.  An example is the use of
free energy \cite{FreeEnergyRL,FreeEnergyActorCritic,FreeEnergyPOMDP} or
expected energy \cite{ExpectedEnergyRL} to approximate the Q-function in
reinforcement learning.  This energy-based reinforcement learning is the
topic covered in the fourth part of the IJCAI-17 tutorial.  A key aspect
of energy-based reinforcement learning is that the energy used to
approximate the Q-function naturally defines a probability distribution,
which is used for exploration.  Thus, energy-based models that allow
efficient sampling have the advantage in sampling the actions to explore
in reinforcement learning.  There is recent work on the use of deep
neural networks to approximate the Q-function in the framework of
energy-based reinforcement learning \cite{DeepEnergyRL}, where they use
a separate sampling network for efficiency.

\section*{Acknowledgments}
This work was supported by JST CREST Grant Number JPMJCR1304, Japan.  The author thanks Chen-Feng Liu and Samuel Laferriere for pointing out typographical errors in the original version.

\bibliographystyle{plain}
\bibliography{gradient,neuro,boltzmann,dpp,energy,RL,dybm,rnn}

\begin{thebibliography}{10}

\bibitem{SMH07}
Restricted {B}oltzmann machines for collaborative filtering.
\newblock In {\em Proceedings of the 24th international conference on Machine
  learning (ICML 2017)}, pages 791--798, June 2007.

\bibitem{AHS85}
D.~H. Ackley, G.~E. Hinton, and T.~J. Sejnowski.
\newblock A learning algorithm for {B}oltzmann machines.
\newblock {\em Cognitive Science}, 9:147--169, 1985.

\bibitem{SGD}
L.~Bottou.
\newblock Online learning and stochastic approximations.
\newblock In D.~Saad, editor, {\em On-Line Learning in Neural Networks},
  chapter~2, pages 9--42. Cambridge University Press, 2009.

\bibitem{CD}
M.~{\'{A}}. Carreira-Perpi{\~{n}\'{a}}n and G.~E. Hinton.
\newblock On contrastive divergence learning.
\newblock In {\em Proceedings of the tenth International Workshop on Artificial
  Intelligence and Statistics}, pages 33--40, January 2005.

\bibitem{GibbsSampler}
G.~Casella and E.~I. George.
\newblock Explaining the {G}ibbs sampler.
\newblock {\em The Americal Statistician}, 46(3):167--174, 1992.

\bibitem{SpikeSlab}
A.~Courville, J.~Bergstra, and Y.~Bengio.
\newblock A spike and slab restricted {B}oltzmann machine.
\newblock In {\em Proceedings of the 14th International Conference on
  Artificial Intelligence and Statistics}, pages 233--241, 2011.

\bibitem{SpikeSlab2}
A.~Courville, J.~Bergstra, and Y.~Bengio.
\newblock Unsupervised models of images by spike-and-slab {RBM}s.
\newblock In {\em Proceedings of the 28th International Conference on Machine
  Learning}, pages 1145--1152, 2011.

\bibitem{DABHC17}
Z.~Dai, A.~Almahairi, P.~Bachman, E.~Hovy, and A.~Courville.
\newblock Calibrating energy-based generative adversarial networks.
\newblock {\em CoRR}, abs/1702.01691, 2017.

\bibitem{AdaGrad}
J.~Duchi, E.~Hazan, and Y.~Singer.
\newblock Adaptive subgradient methods for online learning and stochastic
  optimization.
\newblock {\em Journal of Machine Learning Research}, 12:2121--2159, 2011.

\bibitem{ExpectedEnergyRL}
S.~Elfwing, , E.~Uchibe, and K.~Doya.
\newblock From free energy to expected energy: {I}mproving energy-based value
  function approximation in reinforcement learning.
\newblock {\em Neural Networks}, 84:17--27, 2016.

\bibitem{LowRankDPP}
M.~Gartrell, U.~Paquet, and N.~Koenigstein.
\newblock Low-rank factorization of determinantal point processes.
\newblock In {\em Proceedings of the Thirty-First AAAI Conference on Artificial
  Intelligence (AAAI-17)}, pages 1912--1918, February 2017.

\bibitem{EMFullRankDPP}
J.~A. Gillenwater, A.~Kulesza, E.~Fox, and B.~Taskar.
\newblock Expectation-maximization for learning determinantal point processes.
\newblock In Z.~Ghahramani, M.~Welling, C.~Cortes, N.~D. Lawrence, and K.~Q.
  Weinberger, editors, {\em Advances in Neural Information Processing Systems
  27}, pages 3149--3157. Curran Associates, Inc., 2014.

\bibitem{DeepLearning_GBC}
I.~Goodfellow, Y.~Bengio, and A.~Courville.
\newblock {\em Deep Learning}.
\newblock MIT Press, 2016.

\bibitem{GAN}
I.~Goodfellow, J.~Pouget-Abadie, M.~Mirza, B.~Xu, D.~Warde-Farley, S.~Ozair,
  A.~Courville, and Y.~Bengio.
\newblock Generative adversarial nets.
\newblock In Z.~Ghahramani, M.~Welling, C.~Cortes, N.~D. Lawrence, and K.~Q.
  Weinberger, editors, {\em Advances in Neural Information Processing Systems
  27}, pages 2672--2680. Curran Associates, Inc., 2014.

\bibitem{DeepEnergyRL}
T.~Haarnoja, H.~Tang, P.~Abbeel, and S.~Levine.
\newblock Reinforcement learning with deep energy-based policies.
\newblock In {\em Proceedings of the 34th International Conference on Machine
  Learning}, pages 1352--1361, August 2017.

\bibitem{Hebb}
D.~O. Hebb.
\newblock {\em The organization of behavior: {A} neuropsychological approach}.
\newblock Wiley, 1949.

\bibitem{FreeEnergyActorCritic}
N.~Heess, D.~Silver, and Y.~W. Teh.
\newblock Actor-critic reinforcement learning with energy-based policies.
\newblock In M.~P. Deisenroth, C.~Szepesv\'ari, and J.~Peters, editors, {\em
  Proceedings of the Tenth European Workshop on Reinforcement Learning},
  volume~24 of {\em Proceedings of Machine Learning Research}, pages 45--58,
  Edinburgh, Scotland, 2013. PMLR.

\bibitem{PoE}
G.~E. Hinton.
\newblock Training products of experts by minimizing contrastive divergence.
\newblock {\em Neural Computation}, 14(8):1771--1800, August 2002.

\bibitem{BM_scholarpedia}
G.~E. Hinton.
\newblock Boltzmann machine.
\newblock {\em Scholarpedia}, 2(5):1668, 2007.

\bibitem{Conditional}
G.~E. Hinton.
\newblock Boltzmann machines.
\newblock In C.~Sammut and G.~I. Webb, editors, {\em Encyclopedia of Machine
  Learning}. Springer, 2010.

\bibitem{HinSal06}
G.~E. Hinton and R.~Salakhutdinov.
\newblock Reducing the dimensionality of data with neural networks.
\newblock {\em Science}, 313:504--507, 2006.

\bibitem{ScoreMatching}
A.~Hyv\"arinen.
\newblock Estimation of non-normalized statistical models by score matching.
\newblock {\em Journal of Machine Learning Research}, 6:695--709, 2005.

\bibitem{Hyv07}
A.~Hyv\"arinen.
\newblock Some extensions of score matching.
\newblock {\em Computational Statistics \& Data Analysis}, 51(5):2499--–2512,
  2007.

\bibitem{Hyv08}
A.~Hyv\"arinen.
\newblock Optimal approximation of signal priors.
\newblock {\em Neural Computation}, 20:3087--–3110, 2008.

\bibitem{Ising}
E.~Ising.
\newblock Beitrag zur theorie des ferromagnetismus.
\newblock {\em Zeitschrift f\"{u}r Physik}, 31(1):253?258, 1925.

\bibitem{KimBen16}
T.~Kim and Y.~Bengio.
\newblock Deep directed generative models with energy-based probability
  estimation.
\newblock {\em CoRR}, abs/1606.03439, 2016.

\bibitem{MRFbook}
R.~Kindermann and J.~L. Snell.
\newblock {\em Markov random field and their applications}.
\newblock Americal Mathematical Society, 1980.

\bibitem{Adam}
D.~P. Kingma and J.~Ba.
\newblock Adam: {A} method for stochastic optimization.
\newblock In {\em Proceedings of the International Conference on Learning
  Representations {\rm (ICLR), arXiv:1412.6980}}, 2015.

\bibitem{KinLe10}
D.~P. Kingma and Y.~LeCun.
\newblock Regularized estimation of image statistics by score matching.
\newblock In J.~D. Lafferty, C.~K.~I. Williams, J.~Shawe-Taylor, R.~S. Zemel,
  and A.~Culotta, editors, {\em Advances in Neural Information Processing
  Systems 23}, pages 1126--1134. Curran Associates, Inc., 2010.

\bibitem{KLH09}
U.~K\"oster, J.~T. Lindgren, and A.~Hyv\"arinen.
\newblock Estimating {M}arkov random field potentials for natural images.
\newblock In T.~Adali T, C.~Jutten C, J.~M.~T. Romano, and A.~K. Barros,
  editors, {\em Independent Component Analysis and Signal Separation. ICA 2009.
  Lecture Notes in Computer Science, vol. 5441}. Springer, 2009.

\bibitem{GBRBM}
A.~Krizhevsky.
\newblock Learning multiple layers of features from tiny images.
\newblock Master's thesis, Computer Science Department, University of Toronto,
  Toronto, Canada, 2009.

\bibitem{DPP4ML}
A.~Kulesza and B.~Taskar.
\newblock {\em Determinantal Point Processes for Machine Learning}.
\newblock Now Publishers Inc., Hanover, MA, USA, 2012.

\bibitem{classificationRBM}
H.~Larochelle and Y.~Bengio.
\newblock Classification using discriminative restricted {B}oltzmann machines.
\newblock In {\em Proceedings of the 25th international conference on Machine
  learning (ICML 2008)}, pages 536--543, 2008.

\bibitem{LeCun06}
Y.~LeCun, S.~Chopra, R.~Hadsell, M.~A. Ranzato, and F.~J. Huang.
\newblock {\em A Tutorial on Energy-Based Learning}.
\newblock MIT Press, 2006.

\bibitem{EBMloss}
Y.~LeCun and F.~J. Huang.
\newblock Loss functions for discriminative training of energy-based models.
\newblock In {\em Proceedings of the 10-th International Workshop on Artificial
  Intelligence and Statistics}, pages 206--213, 2005.

\bibitem{MarkovChainMixingTime}
D.~A. Levin, Y.~Peres, and E.~L. Wilmer.
\newblock {\em Markov Chains and Mixing Times}.
\newblock American Mathematical Society, first edition, 2008.

\bibitem{shortHebb}
S.~L\"{o}wel and W.~Singer.
\newblock Selection of intrinsic horizontal connections in the visual cortex by
  correlated neuronal activity.
\newblock {\em Science}, 255:209--212, 1992.

\bibitem{Lyu09}
S.~Lyu.
\newblock Interpretation and generalization of score matching.
\newblock In {\em Proceedings of the 25th Conference in Uncertainty in
  Artificial Intelligence (UAI’09)}, pages 359--366, June 2009.

\bibitem{DPP}
O.~Macchi.
\newblock The coincidence approach to stochastic point processes.
\newblock {\em Advances in Applied Probability}, 7(1):83--122, 1975.

\bibitem{kronDPP}
Z.~Mariet and S.~Sra.
\newblock Kronecker determinantal point processes.
\newblock In D.~D. Lee, M.~Sugiyama, U.~V. Luxburg, I.~Guyon, and R.~Garnett,
  editors, {\em Advances in Neural Information Processing Systems 29}, pages
  2694--2702. Curran Associates, Inc., 2016.

\bibitem{GaussianBM}
T.~Marks and J.~Movellan.
\newblock Diffusion networks, products of experts, and factor analysis.
\newblock In {\em Proceedings of the Third International Conference on
  Independent Component Analysis and Blind Source Separation}, 2001.

\bibitem{OLM07}
M.~Osadchy, Y.~LeCun, and M.~Miller.
\newblock Synergistic face detection and pose estimation with energy-based
  models.
\newblock {\em Journal of Machine Learning Research}, 8(1197--1215), 2007.

\bibitem{BMsurvey3}
T.~Osogami.
\newblock Boltzmann machines for time-series.
\newblock Technical Report RT0980, IBM Research - Tokyo, 2017.

\bibitem{BidirectionalDyBM}
T.~Osogami, H.~Kajino, and T.~Sekiyama.
\newblock Bidirectional learning for time-series models with hidden units.
\newblock In {\em Proceedings of the 34th International Conference on Machine
  Learning (ICML 2017)}, pages 2711--2720, August 2017.

\bibitem{RBMchoice}
T.~Osogami and M.~Otsuka.
\newblock Restricted {B}oltzmann machines modeling human choice.
\newblock In {\em Advances in Neural Information Processing Systems 27}, pages
  73--81. 2014.

\bibitem{DyDPP}
T.~Osogami, R.~Raymond, A.~Goel, T.~Shirai, and T.~Maehara.
\newblock Dynamic determinantal point processes.
\newblock 2017.

\bibitem{DeepChoice}
M.~Otsuka and T.~Osogami.
\newblock A deep choice model.
\newblock In {\em Proceedings of the 30th AAAI Conference on Artificial
  Intelligence (AAAI-16)}, pages 850--856, January 2016.

\bibitem{FreeEnergyPOMDP}
M.~Otsuka, J.~Yoshimoto, and K.~Doya.
\newblock Free-energy-based reinforcement learning in a partially observable
  environment.
\newblock In {\em Proceedings of the European Symposium on Artificial Neural
  Networks --- Computational Intelligence and Machine Learning}, pages 28--30,
  April 2010.

\bibitem{MeanFieldBM}
C.~Peterson and J.~Anderson.
\newblock A mean field theory learning algorithm for neural networks.
\newblock {\em Complex Systems}, 1(5):995--1019, 1987.

\bibitem{momentum}
N.~Qian.
\newblock On the momentum term in gradient descent learning algorithms.
\newblock {\em Neural Networks: The Official Journal of the International
  Neural Network Society}, 12(1):145--151, 1999.

\bibitem{DBM}
R.~R. Salakhutdinov and G.~E. Hinton.
\newblock Deep {B}oltzmann machines.
\newblock In {\em Proceedings of the International Conference on Artificial
  Intelligence and Statistics (AISTATS-09)}, pages 448--455, April 2009.

\bibitem{FreeEnergyRL}
B.~Sallans and G.~E. Hinton.
\newblock Using free energies to represent {Q}-values in a multiagent
  reinforcement learning task.
\newblock In T.~K. Leen, T.~G. Dietterich, and V.~Tresp, editors, {\em Advances
  in Neural Information Processing Systems 13}, pages 1075--1081. MIT Press,
  2001.

\bibitem{SchGAN}
J.~Schmidhuber.
\newblock Learning factorial codes by predictability minimization.
\newblock {\em Neural Computation}, 4(6):863--–879, 1992.

\bibitem{DeepLearning_Sch}
J.~Schmidhuber.
\newblock Deep learning in neural networks: An overview.
\newblock {\em Neural Networks}, 61:85--117, 2015.

\bibitem{HigherBM}
T.~J. Sejnowski.
\newblock Higher-order {B}oltzmann machines.
\newblock {\em American Institute of Physics, Conference Proceedings},
  151(1):398--403, 1986.

\bibitem{MR2018415}
Tomoyuki Shirai and Yoichiro Takahashi.
\newblock Random point fields associated with certain {F}redholm determinants.
  {I}. {F}ermion, {P}oisson and boson point processes.
\newblock {\em Journal of Functional Analysis}, 205(2):414--463, 2003.

\bibitem{MR1989442}
Tomoyuki Shirai and Yoichiro Takahashi.
\newblock Random point fields associated with certain {F}redholm determinants.
  {II}. {F}ermion shifts and their ergodic and {G}ibbs properties.
\newblock {\em The Annals of Probability}, 31(3):1533--1564, 2003.

\bibitem{TRBM}
I.~Sutskever and G.~E. Hinton.
\newblock Learning multilevel distributed representations for high-dimensional
  sequences.
\newblock In {\em Proceedings of the Eleventh International Conference on
  Artificial Intelligence and Statistics (AISTATS-07)}, volume~2, pages
  548--555. Journal of Machine Learning Research - Proceedings Track, 2007.

\bibitem{RTRBM}
I.~Sutskever, G.~E. Hinton, and G.~W. Taylor.
\newblock The recurrent temporal restricted {B}oltzmann machine.
\newblock In {\em Advances in Neural Information Processing Systems 21}, pages
  1601--1608. December 2008.

\bibitem{SRBMF11}
K.~Swersky, M.~Ranzato, D.~Buchman, B.~M. Marlin, and N.~Freitas.
\newblock On autoencoders and score matching for energy based models.
\newblock In {\em Proceedings of the 28th International Conference on Machine
  Learning (ICML 2011)}, pages 1201--1208, June 2011.

\bibitem{RMSProp}
T.~Tieleman and G.~E. Hinton.
\newblock Lecture 6.5---{R}msprop: Divide the gradient by a running average of
  its recent magnitude.
\newblock COURSERA: Neural Networks for Machine Learning, 2012.

\bibitem{Vin11}
P.~Vincent.
\newblock A connection between score matching and denoising autoencoders.
\newblock {\em Neural Computation}, 23(7):1661--1674, 2011.

\bibitem{MeanFieldWelHin}
M.~Welling and G.~E. Hinton.
\newblock A new learning algorithm for mean field {B}oltzmann machines.
\newblock In {\em Proceedings of the 12th International Conference on
  Artificial Neural Networks}, pages 351--357, August 2002.

\bibitem{WRH04}
M.~Welling, M.~Rosen-{Z}vi, and G.~E. Hinton.
\newblock Exponential family harmoniums with an application to information
  retrieval.
\newblock In {\em Advances in Neural Information Processing Systems 17}, pages
  1481--1488. 2004.

\bibitem{WZH03}
M.~Welling, R.~S. Zemel, and G.~E. Hinton.
\newblock Self supervised boosting.
\newblock In S.~Becker, S.~Thrun, and K.~Obermayer, editors, {\em Advances in
  Neural Information Processing Systems 15}, pages 681--688. MIT Press, 2003.

\bibitem{ZCLZ16}
S.~Zhai, Y.~Cheng, W.~Lu, and Z.~Zhang.
\newblock Deep structured energy based models for anomaly detection.
\newblock In {\em Proceedings of the 33rd International Conference on Machine
  Learning (ICML 2016)}, pages 1100--1109, June 2016.

\bibitem{ZML16}
J.~J. Zhao, M.~Mathieu, and Y.~LeCun.
\newblock Energy-based generative adversarial network.
\newblock {\em CoRR}, abs/1609.03126, 2016.

\end{thebibliography}

\end{document}